\documentclass{ifacconf}
\usepackage{natbib}        % required for bibliography

\usepackage{tikz}
\usetikzlibrary{
	positioning,
	arrows.meta,
	calc,
	shapes.symbols,
	patterns,
	shapes
}
\usepackage{comment}
\usepackage{notoccite}           % Ensure citations in captions do not cause problems

\usepackage{amssymb}
\usepackage{mathtools}           % Includes and extends amsmath
\usepackage{bm}                  % Bold math symbols (\bm)
\usepackage{cancel}              % \cancel, \bcancel, etc.
\usepackage{derivative}          % For \dv, \pdv, etc.

\usepackage{tabularx}
\usepackage{booktabs}
\usepackage{array}
\usepackage{arydshln}            % Dashed lines in tables
\usepackage{pgfplots}            % For plotting
\usepgfplotslibrary{groupplots}
\pgfplotsset{compat=newest}

\pgfkeys{ 
  /pgf/number format/.cd, set decimal separator={.},
  set thousands separator = {} 
}

\usepackage[export]{adjustbox} % lets \includegraphics take max width
\usepackage{enumitem}

\usepackage{amsmath} % for \tag*
\providecommand{\qedsymbol}{\ensuremath{\square}} % or \ensuremath{\blacksquare}

\makeatletter
\@ifundefined{pf}{%
  \newenvironment{pf}{\par\noindent\textit{Proof.}\ }{\hfill\qedsymbol\par}
}{%
  \let\orig@pf\pf
  \let\endorig@pf\endpf
  \renewenvironment{pf}{\orig@pf}{\hfill\qedsymbol\par\endorig@pf}
}
\makeatother

\newcommand{\vect}[1]{\bm{#1}}
\newcommand{\mat}[1]{\bm{#1}}
\usepackage[colorinlistoftodos,textwidth=\marginparwidth]{todonotes} % 

\definecolor{light_grey}{rgb}{0.9,0.9,0.9}
\definecolor{dark_green}{rgb}{0.35,0.7,0.2}
\definecolor{good_pink}{RGB}{255,20,147}

\definecolor{c0}{RGB}{0,0,0}       % black
\definecolor{c1}{RGB}{228,26,28}   % red
\definecolor{c2}{RGB}{55,126,184}  % blue
\definecolor{c3}{RGB}{77,175,74}   % green
\definecolor{c4}{RGB}{152,78,163}  % purple
\definecolor{c5}{RGB}{255,127,0}   % orange
\definecolor{c6}{RGB}{146,146,27}  % gold
\definecolor{c7}{RGB}{0,122,115}   % sea green

\tikzset{
	c1plot/.style={
		color=c1,
		mark=square,
		mark options={solid, fill=white},
		mark repeat=300
	},
	c2plot/.style={
		color=c2,
		mark=o,
		mark options={solid, fill=white},
		mark repeat=333
	},
	c3plot/.style={
		color=c3,
		mark=triangle,
		mark options={solid, fill=white},
		mark repeat=412
	},
	c4plot/.style={
		color=c4,
		mark=diamond,
		mark options={solid, fill=white},
		mark repeat=440
	},
	c5plot/.style={
		color=c5,
		mark=pentagon,
		mark options={solid, fill=white},
		mark repeat=500
	},
	c6plot/.style={
		color=c6,
		mark=x,
		mark options={solid, fill=white},
		mark repeat=475
	},
	c7plot/.style={
		color=c7,
		mark=+,
		mark options={solid, fill=white},
		mark repeat=433
	},
}

\newcounter{MaxActuationvalue}
\newcounter{Nvalue}
\newcounter{ARLvalue}
\newcounter{SimulationDurationvalue}
\newcounter{AttackDurationvalue}
\newcounter{AttackStartIterationvalue}
\newtheorem{theorem}{Theorem}
\newtheorem{assumption}{Assumption}

\newcommand{\TimeStep}{k}

\newcommand{\ActuationSymbol}{u}
\newcommand{\ActuationSaturated}{\boldsymbol{\ActuationSymbol}}

\newcommand{\Sat}[1]{\mathrm{sat}(#1)}

\newcommand{\Position}{\boldsymbol{p}}
\newcommand{\Orientation}{\boldsymbol{o}}

\newcommand{\TargetPosition}{\bar{\Position}}
\newcommand{\PositionEst}{\hat{\Position}}
\newcommand{\TargetRotationMatrix}{\bar{\mat{\RotationMatrix}}}

\newcommand{\TargetAngularVelocity}{\bar{\bm{\omega}}}
\newcommand{\EstimatedAngularVelocity}{\hat{\bm{\omega}}}
\newcommand{\TargetPositionAttack}{\bar{\Position}^{\text{A}}}
\newcommand{\TargetVelocityAttack}{\dot{\bar{\Position}}^{\text{A}}}
\newcommand{\TargetAccAttack}{\ddot{\bar{\Position}}^{\text{A}}}

\newcommand{\EstimatedPositionProj}{\projSymbol{\Position}}
\newcommand{\EstimatedRotationMatrix}{\hat{\mat{\RotationMatrix}}}

\newcommand{\ErrorVector}{\boldsymbol{e}}
\newcommand{\ErrorPosition}[1]{{\ErrorVector}_{\Position,#1}}
\newcommand{\ErrorPositionDerivative}[1]{\dot{\ErrorVector}_{\Position,#1}}
\newcommand{\ErrorOrientation}[1]{\ErrorVector_{\Orientation,#1}}
\newcommand{\ErrorAngular}[1]{{\ErrorVector}_{\omega,#1}}

\newcommand{\GainProportionalPosition}{K_{pp}}
\newcommand{\GainDerivativePosition}{K_{dp}}
\newcommand{\GainProportionalOrientation}{K_{po}}
\newcommand{\GainDerivativeOrientation}{K_{do}}

\newcommand{\AngleAxisAxis}{\hat{\boldsymbol{r}}}
\newcommand{\AngleAxisAngle}{\hat{\theta}}

\newcommand{\StateVector}{\boldsymbol{x}}

\newcommand{\Residual}{\boldsymbol{r}}

\newcommand{\KalmanGain}{\mat{L}}

\newcommand{\falseAlarmProb}{\alpha_{\text{F}}}

\newcommand{\AttackDuration}{T}
\newcommand{\ActuationMin}{\ActuationSaturated_{\mathrm{min}}}
\newcommand{\ActuationMax}{\ActuationSaturated_{\mathrm{max}}}

\newcommand{\DirectionVector}{\boldsymbol{d}}
\newcommand{\EstimatedAttackDirection}{\DirectionVector}
\newcommand{\ManipulabilityMatrix}{\bm{\mathcal{M}}}
\newcommand{\ManipulabilityMeasure}{w}
\newcommand{\CostFunction}{\mathcal{C}}
\newcommand{\hessian}{\nabla_{\bm{q}}^2}
\newcommand{\GradientDescentGain}{\nu}

\newcommand{\GradientDescentGainMax}{{\GradientDescentGain}_\mathrm{max}}

\newcommand{\WeightedPIMatrix}{\mathbf{W}_\TimeStep}

\newcommand{\BalanceScalar}{\alpha}

\newcommand{\ShiftFactor}{\mu}
\newcommand{\ShiftFactorTolerance}{\epsilon}

\newcommand{\ActuationNULL}[1]{\ActuationSaturated^{\text{sec}}_{#1}}
\newcommand{\nullSpaceDampingCoeff}{c_w}

\newcommand{\dampingGain}{\phi}

\newcommand{\shapeLimit}{F}                  % asymptotic limit as \tilde z \to \infty
\newcommand{\shapeExp}{\gamma}               % steepness exponent
\newcommand{\shapeValAtControlPoint}{\beta}  % value g(z_x) at the control point
\newcommand{\shapeControlPoint}{z_{\text{x}}}% control-point abscissa

\newcommand{\RotationMatrix}{R}
\newcommand{\GeometricJacobian}{\mathbf{J}}
\newcommand{\inertiaMatrix}{\bm{\mathcal{B}}}

\newcommand{\joint}{\vect{q}}

\newcommand{\projSymbol}{\tilde}
\newcommand{\ActuationProjectedState}{\projSymbol{\StateVector}}
\newcommand{\ActuationProjectedJointConfiguration}{\projSymbol{\joint}}

\newcommand{\simul}{\mathcal{Z}}
\newcommand{\simulLin}{\bm{Z}}

\newcommand{\noiseInputMat}{\bm{\Gamma}}

\newcommand{\augStateSym}{\xi}
\newcommand{\augState}{\bm{\augStateSym}}
\newcommand{\augStateCov}[1]{\bm{P}_{\augStateSym,#1}}

\newcommand{\GainAttackProportional}{\boldsymbol{K}^{\text{A}}_{\text{p}}}
\newcommand{\GainAttackDerivative}{\boldsymbol{K}^{\text{A}}_{\text{d}}}
\newcommand{\PredictedPosition}{\Position^{\text{SIM}}}
\newcommand{\PredictedVelocity}{\dot{\Position}^{\text{SIM}}}
\newcommand{\PredictedAcceleration}{\ddot{\Position}^{\text{SIM}}}

\newcommand{\powerNominalJoint}{{\tilde{P}}^{\text{nom}}_{k,j}}
\newcommand{\powerTotal}{P^{\text{tot}}_k}

\newcommand{\yes}{\ensuremath{\blacksquare}} % enabled
\newcommand{\no}{\ensuremath{\square}}       % disabled
\newcommand{\dampingSup}{\text{d}}
\newcommand{\desprob}{\psi}

\usepackage{pgfplots}
\usepackage{pgfplotstable}
\pgfplotsset{compat=1.18} % or another version you use

\pgfkeys{/pgf/number format/.cd,
  fixed,
  precision=1,
}

\newcommand{\uNominal}{\bm{u}^{\text{nom}}}
\DeclareMathOperator{\sign}{sign}

\newcommand{\baselineIn}{\bm{c}}
\newcommand{\optLambda}{\zeta}
\newcommand{\optimAmat}{\bm{\Sigma}^{-1}}

\newcommand{\AonePowMean}{\AonePowerFinalLOneAvg}
\newcommand{\AthreePowMean}{\AthreePowerFinalLOneAvg}
\newcommand{\AonePowMax}{\AonePowerFinalLOneMax}
\newcommand{\AthreePowMax}{\AthreePowerFinalLOneMax}

\newcommand{\BthreePowMean}{\BthreePowerFinalLOneAvg}
\newcommand{\BfourPowMean}{\BfourPowerFinalLOneAvg}
\newcommand{\BthreePowMax}{\BthreePowerFinalLOneMax}
\newcommand{\BfourPowMax}{\BfourPowerFinalLOneMax}

\pgfplotstableread[col sep=comma]{imgs/A1_NoDef_circleConstants.csv}\ConstAone
\pgfplotstableread[col sep=comma]{imgs/A1_NoDef_circle_avgConstants.csv}\ConstAoneAvg
\pgfplotstableread[col sep=comma]{imgs/A2_damping_circleConstants.csv}\ConstAtwo
\pgfplotstableread[col sep=comma]{imgs/A2_damping_circle_avgConstants.csv}\ConstAtwoAvg
\pgfplotstableread[col sep=comma]{imgs/A3_dampingAndManipulab_circleConstants.csv}\ConstAthree
\pgfplotstableread[col sep=comma]{imgs/A3_dampingAndManipulab_circle_avgConstants.csv}\ConstAthreeAvg
\pgfplotstableread[col sep=comma]{imgs/B1_noDef_stillConstants.csv}\ConstBone
\pgfplotstableread[col sep=comma]{imgs/B2_chisqr_stillConstants.csv}\ConstBtwo
\pgfplotstableread[col sep=comma]{imgs/B3_damping_stillConstants.csv}\ConstBthree
\pgfplotstableread[col sep=comma]{imgs/B4_dampingAndManipulab_stillConstants.csv}\ConstBfour
\pgfplotstableread[col sep=comma]{imgs/baseConstants.csv}\ConstBase

\pgfplotstableread[col sep=comma]{imgs/A1_NoDef_circleNumericConstants.csv}\NumConstAone
\pgfplotstableread[col sep=comma]{imgs/A1_NoDef_circle_avgNumericConstants.csv}\NumConstAoneAvg
\pgfplotstableread[col sep=comma]{imgs/A2_damping_circleNumericConstants.csv}\NumConstAtwo
\pgfplotstableread[col sep=comma]{imgs/A2_damping_circle_avgNumericConstants.csv}\NumConstAtwoAvg
\pgfplotstableread[col sep=comma]{imgs/A3_dampingAndManipulab_circleNumericConstants.csv}\NumConstAthree
\pgfplotstableread[col sep=comma]{imgs/A3_dampingAndManipulab_circle_avgNumericConstants.csv}\NumConstAthreeAvg
\pgfplotstableread[col sep=comma]{imgs/B1_noDef_stillNumericConstants.csv}\NumConstBone
\pgfplotstableread[col sep=comma]{imgs/B2_chisqr_stillNumericConstants.csv}\NumConstBtwo
\pgfplotstableread[col sep=comma]{imgs/B3_damping_stillNumericConstants.csv}\NumConstBthree
\pgfplotstableread[col sep=comma]{imgs/B4_dampingAndManipulab_stillNumericConstants.csv}\NumConstBfour
\pgfplotstableread[col sep=comma]{imgs/baseNumericConstants.csv}\NumConstBase

\pgfplotstableread[col sep=comma]{imgs/A1_NoDef_circleStringResults.csv}%
    \ExpAoneStr
\pgfplotstableread[col sep=comma]{imgs/A1_NoDef_circle_avgStringResults.csv}%
    \ExpAoneStrAvg
\pgfplotstableread[col sep=comma]{imgs/A2_damping_circleStringResults.csv}%
    \ExpAtwoStr
\pgfplotstableread[col sep=comma]{imgs/A2_damping_circle_avgStringResults.csv}%
    \ExpAtwoStrAvg
\pgfplotstableread[col sep=comma]{imgs/A3_dampingAndManipulab_circleStringResults.csv}%
    \ExpAthreeStr
\pgfplotstableread[col sep=comma]{imgs/A3_dampingAndManipulab_circle_avgStringResults.csv}%
    \ExpAthreeStrAvg
\pgfplotstableread[col sep=comma]{imgs/B1_noDef_stillStringResults.csv}%
    \ExpBoneStr
\pgfplotstableread[col sep=comma]{imgs/B2_chisqr_stillStringResults.csv}%
    \ExpBtwoStr
\pgfplotstableread[col sep=comma]{imgs/B3_damping_stillStringResults.csv}%
    \ExpBthreeStr
\pgfplotstableread[col sep=comma]{imgs/B4_dampingAndManipulab_stillStringResults.csv}%
    \ExpBfourStr
\pgfplotstableread[col sep=comma]{imgs/baseStringResults.csv}%
    \ExpBaseStr

\pgfplotstableread[col sep=comma]{imgs/A1_NoDef_circleNumericResults.csv}%
    \ExpAoneNum
\pgfplotstableread[col sep=comma]{imgs/A1_NoDef_circle_avgNumericResults.csv}%
    \ExpAoneNumAvg
\pgfplotstableread[col sep=comma]{imgs/A2_damping_circleNumericResults.csv}%
    \ExpAtwoNum
\pgfplotstableread[col sep=comma]{imgs/A2_damping_circle_avgNumericResults.csv}%
    \ExpAtwoNumAvg
\pgfplotstableread[col sep=comma]{imgs/A3_dampingAndManipulab_circleNumericResults.csv}%
    \ExpAthreeNum
\pgfplotstableread[col sep=comma]{imgs/A3_dampingAndManipulab_circle_avgNumericResults.csv}%
    \ExpAthreeNumAvg
\pgfplotstableread[col sep=comma]{imgs/B1_noDef_stillNumericResults.csv}%
    \ExpBoneNum
\pgfplotstableread[col sep=comma]{imgs/B2_chisqr_stillNumericResults.csv}%
    \ExpBtwoNum
\pgfplotstableread[col sep=comma]{imgs/B3_damping_stillNumericResults.csv}%
    \ExpBthreeNum
\pgfplotstableread[col sep=comma]{imgs/B4_dampingAndManipulab_stillNumericResults.csv}%
    \ExpBfourNum
\pgfplotstableread[col sep=comma]{imgs/baseNumericResults.csv}%
    \ExpBaseNum

\newcommand{\GetNumVal}[4]{%
  \pgfmathtruncatemacro{\row}{#4-1}%
  \pgfplotstablegetelem{\row}{#3}\of#2%
  \edef#1{\pgfplotsretval}%
}

\newcommand{\percentage}[2]{%
  \pgfkeys{/pgf/fpu=true, /pgf/fpu/output format=fixed}%
  \pgfmathparse{100*((#2)/(#1) - 1)}%
  \pgfmathprintnumber[fixed,precision=1,showpos]{\pgfmathresult}%
  \pgfkeys{/pgf/fpu=false}%
}

\newcommand{\csvConstAone}[2]{%
    \pgfplotstablegetelem{\numexpr#2-1}{#1}\of{\ConstAone}%
    \pgfplotsretval%
}

\newcommand{\csvAoneAvg}[2]{%
    \pgfplotstablegetelem{\numexpr#2-1}{#1}\of{\ExpAoneStrAvg}%
    \pgfplotsretval%
}

\newcommand{\csvAtwoAvg}[2]{%
    \pgfplotstablegetelem{\numexpr#2-1}{#1}\of{\ExpAtwoStrAvg}%
    \pgfplotsretval%
}

\newcommand{\csvAthreeAvg}[2]{%
    \pgfplotstablegetelem{\numexpr#2-1}{#1}\of{\ExpAthreeStrAvg}%
    \pgfplotsretval%
}

\newcommand{\csvBone}[2]{%
    \pgfplotstablegetelem{\numexpr#2-1}{#1}\of{\ExpBoneStr}%
    \pgfplotsretval%
}

\newcommand{\csvConstBtwo}[2]{%
    \pgfplotstablegetelem{\numexpr#2-1}{#1}\of{\ConstBtwo}%
    \pgfplotsretval%
}
\newcommand{\csvBtwo}[2]{%
    \pgfplotstablegetelem{\numexpr#2-1}{#1}\of{\ExpBtwoStr}%
    \pgfplotsretval%
}

\newcommand{\csvBthree}[2]{%
    \pgfplotstablegetelem{\numexpr#2-1}{#1}\of{\ExpBthreeStr}%
    \pgfplotsretval%
}

\newcommand{\csvConstBfour}[2]{%
    \pgfplotstablegetelem{\numexpr#2-1}{#1}\of{\ConstBfour}%
    \pgfplotsretval%
}
\newcommand{\csvBfour}[2]{%
    \pgfplotstablegetelem{\numexpr#2-1}{#1}\of{\ExpBfourStr}%
    \pgfplotsretval%
}

\GetNumVal{\BtwoADStau}{\NumConstBtwo}{ADStau}{1}
\GetNumVal{\BthreeADStau}{\NumConstBthree}{ADStau}{1}
\GetNumVal{\BfourADStau}{\NumConstBfour}{ADStau}{1}

\GetNumVal{\BtwoADStauPrime}{\ConstBtwo}{ADStauPrime}{1}
\GetNumVal{\BthreeADStauPrime}{\ConstBthree}{ADStauPrime}{1}
\GetNumVal{\BfourADStauPrime}{\ConstBfour}{ADStauPrime}{1}

\GetNumVal{\BtwoARLhours}{\ConstBtwo}{ARLhours}{1}
\GetNumVal{\BthreeARLhours}{\ConstBthree}{ARLhours}{1}
\GetNumVal{\BfourARLhours}{\ConstBfour}{ARLhours}{1}

\GetNumVal{\BtwoARLyear}{\ConstBtwo}{ARLyear}{1}
\GetNumVal{\BthreeARLyear}{\ConstBthree}{ARLyear}{1}
\GetNumVal{\BfourARLyear}{\ConstBfour}{ARLyear}{1}

\GetNumVal{\AoneNsteps}{\NumConstAone}{Nsteps}{1}
\GetNumVal{\AoneNstepsAvg}{\NumConstAoneAvg}{Nsteps}{1}
\GetNumVal{\AtwoNsteps}{\NumConstAtwo}{Nsteps}{1}
\GetNumVal{\AtwoNstepsAvg}{\NumConstAtwoAvg}{Nsteps}{1}
\GetNumVal{\AthreeNsteps}{\NumConstAthree}{Nsteps}{1}
\GetNumVal{\AthreeNstepsAvg}{\NumConstAthreeAvg}{Nsteps}{1}
\GetNumVal{\BoneNsteps}{\NumConstBone}{Nsteps}{1}
\GetNumVal{\BtwoNsteps}{\NumConstBtwo}{Nsteps}{1}
\GetNumVal{\BthreeNsteps}{\NumConstBthree}{Nsteps}{1}
\GetNumVal{\BfourNsteps}{\NumConstBfour}{Nsteps}{1}
\GetNumVal{\BaseNsteps}{\NumConstBase}{Nsteps}{1}

\GetNumVal{\AoneTs}{\NumConstAone}{Ts}{1}

\GetNumVal{\AoneTsAvg}{\NumConstAoneAvg}{Ts}{1}

\GetNumVal{\AtwoTs}{\NumConstAtwo}{Ts}{1}

\GetNumVal{\AtwoTsAvg}{\NumConstAtwoAvg}{Ts}{1}

\GetNumVal{\AthreeTs}{\NumConstAthree}{Ts}{1}

\GetNumVal{\AthreeTsAvg}{\NumConstAthreeAvg}{Ts}{1}

\GetNumVal{\BoneTs}{\NumConstBone}{Ts}{1}

\GetNumVal{\BtwoTs}{\NumConstBtwo}{Ts}{1}

\GetNumVal{\BthreeTs}{\NumConstBthree}{Ts}{1}

\GetNumVal{\BfourTs}{\NumConstBfour}{Ts}{1}

\GetNumVal{\BaseTs}{\NumConstBase}{Ts}{1}

\GetNumVal{\BoneAttChamferSym}{\ExpBoneNum}{att_chamferSym}{1}

\GetNumVal{\BtwoAttChamferSym}{\ExpBtwoNum}{att_chamferSym}{1}

\GetNumVal{\BthreeAttChamferSym}{\ExpBthreeNum}{att_chamferSym}{1}

\GetNumVal{\BfourAttChamferSym}{\ExpBfourNum}{att_chamferSym}{1}

\GetNumVal{\BoneAttFrechetDist}{\ExpBoneNum}{att_frechetDist}{1}

\GetNumVal{\BtwoAttFrechetDist}{\ExpBtwoNum}{att_frechetDist}{1}

\GetNumVal{\BthreeAttFrechetDist}{\ExpBthreeNum}{att_frechetDist}{1}

\GetNumVal{\BfourAttFrechetDist}{\ExpBfourNum}{att_frechetDist}{1}

\GetNumVal{\BoneAttMaxError}{\ExpBoneNum}{att_maxError}{1}

\GetNumVal{\BtwoAttMaxError}{\ExpBtwoNum}{att_maxError}{1}

\GetNumVal{\BthreeAttMaxError}{\ExpBthreeNum}{att_maxError}{1}

\GetNumVal{\BfourAttMaxError}{\ExpBfourNum}{att_maxError}{1}

\GetNumVal{\BoneAttPFinal}{\ConstBone}{att_pFinal}{1}
\GetNumVal{\BtwoAttPFinal}{\ConstBtwo}{att_pFinal}{1}
\GetNumVal{\BthreeAttPFinal}{\ConstBthree}{att_pFinal}{1}
\GetNumVal{\BfourAttPFinal}{\ConstBfour}{att_pFinal}{1}

\GetNumVal{\BoneAttRmsError}{\ExpBoneNum}{att_rmsError}{1}

\GetNumVal{\BtwoAttRmsError}{\ExpBtwoNum}{att_rmsError}{1}

\GetNumVal{\BthreeAttRmsError}{\ExpBthreeNum}{att_rmsError}{1}

\GetNumVal{\BfourAttRmsError}{\ExpBfourNum}{att_rmsError}{1}

\GetNumVal{\AtwoBrakeExcessHours}{\NumConstAtwo}{brakeExcessHours}{1}
\GetNumVal{\AtwoBrakeExcessHoursAvg}{\NumConstAtwoAvg}{brakeExcessHours}{1}
\GetNumVal{\AthreeBrakeExcessHours}{\NumConstAthree}{brakeExcessHours}{1}
\GetNumVal{\AthreeBrakeExcessHoursAvg}{\NumConstAthreeAvg}{brakeExcessHours}{1}
\GetNumVal{\BthreeBrakeExcessHours}{\NumConstBthree}{brakeExcessHours}{1}
\GetNumVal{\BfourBrakeExcessHours}{\NumConstBfour}{brakeExcessHours}{1}

\GetNumVal{\AtwoBrakeExcessPerc}{\ConstAtwo}{brakeExcessPerc}{1}
\GetNumVal{\AtwoBrakeExcessPercAvg}{\ConstAtwoAvg}{brakeExcessPerc}{1}
\GetNumVal{\AthreeBrakeExcessPerc}{\ConstAthree}{brakeExcessPerc}{1}
\GetNumVal{\AthreeBrakeExcessPercAvg}{\ConstAthreeAvg}{brakeExcessPerc}{1}
\GetNumVal{\BthreeBrakeExcessPerc}{\ConstBthree}{brakeExcessPerc}{1}
\GetNumVal{\BfourBrakeExcessPerc}{\ConstBfour}{brakeExcessPerc}{1}

\GetNumVal{\AtwoBrakeExcessSamples}{\NumConstAtwo}{brakeExcessSamples}{1}
\GetNumVal{\AtwoBrakeExcessSamplesAvg}{\NumConstAtwoAvg}{brakeExcessSamples}{1}
\GetNumVal{\AthreeBrakeExcessSamples}{\NumConstAthree}{brakeExcessSamples}{1}
\GetNumVal{\AthreeBrakeExcessSamplesAvg}{\NumConstAthreeAvg}{brakeExcessSamples}{1}
\GetNumVal{\BthreeBrakeExcessSamples}{\NumConstBthree}{brakeExcessSamples}{1}
\GetNumVal{\BfourBrakeExcessSamples}{\NumConstBfour}{brakeExcessSamples}{1}

\GetNumVal{\AtwoBrakeExpSteepness}{\ConstAtwo}{brakeExpSteepness}{1}
\GetNumVal{\AtwoBrakeExpSteepnessAvg}{\ConstAtwoAvg}{brakeExpSteepness}{1}
\GetNumVal{\AthreeBrakeExpSteepness}{\ConstAthree}{brakeExpSteepness}{1}
\GetNumVal{\AthreeBrakeExpSteepnessAvg}{\ConstAthreeAvg}{brakeExpSteepness}{1}
\GetNumVal{\BthreeBrakeExpSteepness}{\ConstBthree}{brakeExpSteepness}{1}
\GetNumVal{\BfourBrakeExpSteepness}{\ConstBfour}{brakeExpSteepness}{1}

\GetNumVal{\AtwoBrakeF}{\ConstAtwo}{brakeF}{1}
\GetNumVal{\AtwoBrakeFAvg}{\ConstAtwoAvg}{brakeF}{1}
\GetNumVal{\AthreeBrakeF}{\ConstAthree}{brakeF}{1}
\GetNumVal{\AthreeBrakeFAvg}{\ConstAthreeAvg}{brakeF}{1}
\GetNumVal{\BthreeBrakeF}{\ConstBthree}{brakeF}{1}
\GetNumVal{\BfourBrakeF}{\ConstBfour}{brakeF}{1}

\GetNumVal{\AtwoBrakeOneMinusP}{\ConstAtwo}{brakeOneMinusP}{1}
\GetNumVal{\AtwoBrakeOneMinusPAvg}{\ConstAtwoAvg}{brakeOneMinusP}{1}
\GetNumVal{\AthreeBrakeOneMinusP}{\ConstAthree}{brakeOneMinusP}{1}
\GetNumVal{\AthreeBrakeOneMinusPAvg}{\ConstAthreeAvg}{brakeOneMinusP}{1}
\GetNumVal{\BthreeBrakeOneMinusP}{\ConstBthree}{brakeOneMinusP}{1}
\GetNumVal{\BfourBrakeOneMinusP}{\ConstBfour}{brakeOneMinusP}{1}

\GetNumVal{\AtwoBrakeZx}{\ConstAtwo}{brakeZx}{1}
\GetNumVal{\AtwoBrakeZxAvg}{\ConstAtwoAvg}{brakeZx}{1}
\GetNumVal{\AthreeBrakeZx}{\ConstAthree}{brakeZx}{1}
\GetNumVal{\AthreeBrakeZxAvg}{\ConstAthreeAvg}{brakeZx}{1}
\GetNumVal{\BthreeBrakeZx}{\ConstBthree}{brakeZx}{1}
\GetNumVal{\BfourBrakeZx}{\ConstBfour}{brakeZx}{1}

\GetNumVal{\AtwoBrakeZy}{\ConstAtwo}{brakeZy}{1}
\GetNumVal{\AtwoBrakeZyAvg}{\ConstAtwoAvg}{brakeZy}{1}
\GetNumVal{\AthreeBrakeZy}{\ConstAthree}{brakeZy}{1}
\GetNumVal{\AthreeBrakeZyAvg}{\ConstAthreeAvg}{brakeZy}{1}
\GetNumVal{\BthreeBrakeZy}{\ConstBthree}{brakeZy}{1}
\GetNumVal{\BfourBrakeZy}{\ConstBfour}{brakeZy}{1}

\GetNumVal{\AtwoCumulativeEnergyDissipatedLast}{\ExpAtwoNum}{cumulative_energy_dissipated_last}{1}

\GetNumVal{\AtwoCumulativeEnergyDissipatedLastAvg}{\ExpAtwoNumAvg}{cumulative_energy_dissipated_last}{1}

\GetNumVal{\AthreeCumulativeEnergyDissipatedLast}{\ExpAthreeNum}{cumulative_energy_dissipated_last}{1}

\GetNumVal{\AthreeCumulativeEnergyDissipatedLastAvg}{\ExpAthreeNumAvg}{cumulative_energy_dissipated_last}{1}

\GetNumVal{\BthreeCumulativeEnergyDissipatedLast}{\ExpBthreeNum}{cumulative_energy_dissipated_last}{1}

\GetNumVal{\BfourCumulativeEnergyDissipatedLast}{\ExpBfourNum}{cumulative_energy_dissipated_last}{1}

\GetNumVal{\AtwoCumulativeEnergyInjectedLast}{\ExpAtwoNum}{cumulative_energy_injected_last}{1}

\GetNumVal{\AtwoCumulativeEnergyInjectedLastAvg}{\ExpAtwoNumAvg}{cumulative_energy_injected_last}{1}

\GetNumVal{\AthreeCumulativeEnergyInjectedLast}{\ExpAthreeNum}{cumulative_energy_injected_last}{1}

\GetNumVal{\AthreeCumulativeEnergyInjectedLastAvg}{\ExpAthreeNumAvg}{cumulative_energy_injected_last}{1}

\GetNumVal{\BthreeCumulativeEnergyInjectedLast}{\ExpBthreeNum}{cumulative_energy_injected_last}{1}

\GetNumVal{\BfourCumulativeEnergyInjectedLast}{\ExpBfourNum}{cumulative_energy_injected_last}{1}

\GetNumVal{\BtwoDesiredChiSquare}{\ConstBtwo}{desiredChiSquare}{1}
\GetNumVal{\BthreeDesiredChiSquare}{\ConstBthree}{desiredChiSquare}{1}
\GetNumVal{\BfourDesiredChiSquare}{\ConstBfour}{desiredChiSquare}{1}

\GetNumVal{\BtwoDetectionWindow}{\ConstBtwo}{detectionWindow}{1}
\GetNumVal{\BthreeDetectionWindow}{\ConstBthree}{detectionWindow}{1}
\GetNumVal{\BfourDetectionWindow}{\ConstBfour}{detectionWindow}{1}

\GetNumVal{\AoneDeviationXEstxRealLTwoAvg}{\ExpAoneNum}{deviation_xEstxReal_l2_avg}{1}

\GetNumVal{\AtwoDeviationXEstxRealLTwoAvg}{\ExpAtwoNum}{deviation_xEstxReal_l2_avg}{1}

\GetNumVal{\AthreeDeviationXEstxRealLTwoAvg}{\ExpAthreeNum}{deviation_xEstxReal_l2_avg}{1}

\GetNumVal{\BoneDeviationXEstxRealLTwoAvg}{\ExpBoneNum}{deviation_xEstxReal_l2_avg}{1}

\GetNumVal{\BtwoDeviationXEstxRealLTwoAvg}{\ExpBtwoNum}{deviation_xEstxReal_l2_avg}{1}

\GetNumVal{\BthreeDeviationXEstxRealLTwoAvg}{\ExpBthreeNum}{deviation_xEstxReal_l2_avg}{1}

\GetNumVal{\BfourDeviationXEstxRealLTwoAvg}{\ExpBfourNum}{deviation_xEstxReal_l2_avg}{1}

\GetNumVal{\BaseDeviationXEstxRealLTwoAvg}{\ExpBaseNum}{deviation_xEstxReal_l2_avg}{1}

\GetNumVal{\AoneDeviationXSFPxRealLTwoAvg}{\ExpAoneNum}{deviation_xSFPxReal_l2_avg}{1}

\GetNumVal{\AoneDeviationXSFPxRealLTwoAvgAvg}{\ExpAoneNumAvg}{deviation_xSFPxReal_l2_avg}{1}

\GetNumVal{\AtwoDeviationXSFPxRealLTwoAvg}{\ExpAtwoNum}{deviation_xSFPxReal_l2_avg}{1}

\GetNumVal{\AtwoDeviationXSFPxRealLTwoAvgAvg}{\ExpAtwoNumAvg}{deviation_xSFPxReal_l2_avg}{1}

\GetNumVal{\AthreeDeviationXSFPxRealLTwoAvg}{\ExpAthreeNum}{deviation_xSFPxReal_l2_avg}{1}

\GetNumVal{\AthreeDeviationXSFPxRealLTwoAvgAvg}{\ExpAthreeNumAvg}{deviation_xSFPxReal_l2_avg}{1}

\GetNumVal{\BoneDeviationXSFPxRealLTwoAvg}{\ExpBoneNum}{deviation_xSFPxReal_l2_avg}{1}

\GetNumVal{\BtwoDeviationXSFPxRealLTwoAvg}{\ExpBtwoNum}{deviation_xSFPxReal_l2_avg}{1}

\GetNumVal{\BthreeDeviationXSFPxRealLTwoAvg}{\ExpBthreeNum}{deviation_xSFPxReal_l2_avg}{1}

\GetNumVal{\BfourDeviationXSFPxRealLTwoAvg}{\ExpBfourNum}{deviation_xSFPxReal_l2_avg}{1}

\GetNumVal{\BaseDeviationXSFPxRealLTwoAvg}{\ExpBaseNum}{deviation_xSFPxReal_l2_avg}{1}

\GetNumVal{\AoneDeviationXSFPxRealLTwoMax}{\ExpAoneNum}{deviation_xSFPxReal_l2_max}{1}

\GetNumVal{\AoneDeviationXSFPxRealLTwoMaxAvg}{\ExpAoneNumAvg}{deviation_xSFPxReal_l2_max}{1}

\GetNumVal{\AtwoDeviationXSFPxRealLTwoMax}{\ExpAtwoNum}{deviation_xSFPxReal_l2_max}{1}

\GetNumVal{\AtwoDeviationXSFPxRealLTwoMaxAvg}{\ExpAtwoNumAvg}{deviation_xSFPxReal_l2_max}{1}

\GetNumVal{\AthreeDeviationXSFPxRealLTwoMax}{\ExpAthreeNum}{deviation_xSFPxReal_l2_max}{1}

\GetNumVal{\AthreeDeviationXSFPxRealLTwoMaxAvg}{\ExpAthreeNumAvg}{deviation_xSFPxReal_l2_max}{1}

\GetNumVal{\BoneDeviationXSFPxRealLTwoMax}{\ExpBoneNum}{deviation_xSFPxReal_l2_max}{1}

\GetNumVal{\BtwoDeviationXSFPxRealLTwoMax}{\ExpBtwoNum}{deviation_xSFPxReal_l2_max}{1}

\GetNumVal{\BthreeDeviationXSFPxRealLTwoMax}{\ExpBthreeNum}{deviation_xSFPxReal_l2_max}{1}

\GetNumVal{\BfourDeviationXSFPxRealLTwoMax}{\ExpBfourNum}{deviation_xSFPxReal_l2_max}{1}

\GetNumVal{\BaseDeviationXSFPxRealLTwoMax}{\ExpBaseNum}{deviation_xSFPxReal_l2_max}{1}

\GetNumVal{\AoneDurationSeconds}{\ConstAone}{durationSeconds}{1}
\GetNumVal{\AoneDurationSecondsAvg}{\ConstAoneAvg}{durationSeconds}{1}
\GetNumVal{\AtwoDurationSeconds}{\ConstAtwo}{durationSeconds}{1}
\GetNumVal{\AtwoDurationSecondsAvg}{\ConstAtwoAvg}{durationSeconds}{1}
\GetNumVal{\AthreeDurationSeconds}{\ConstAthree}{durationSeconds}{1}
\GetNumVal{\AthreeDurationSecondsAvg}{\ConstAthreeAvg}{durationSeconds}{1}
\GetNumVal{\BoneDurationSeconds}{\ConstBone}{durationSeconds}{1}
\GetNumVal{\BtwoDurationSeconds}{\ConstBtwo}{durationSeconds}{1}
\GetNumVal{\BthreeDurationSeconds}{\ConstBthree}{durationSeconds}{1}
\GetNumVal{\BfourDurationSeconds}{\ConstBfour}{durationSeconds}{1}
\GetNumVal{\BaseDurationSeconds}{\ConstBase}{durationSeconds}{1}

\GetNumVal{\AtwoEnergyDissipated}{\ExpAtwoNum}{energy_dissipated}{1}

\GetNumVal{\AtwoEnergyDissipatedAvg}{\ExpAtwoNumAvg}{energy_dissipated}{1}

\GetNumVal{\AthreeEnergyDissipated}{\ExpAthreeNum}{energy_dissipated}{1}

\GetNumVal{\AthreeEnergyDissipatedAvg}{\ExpAthreeNumAvg}{energy_dissipated}{1}

\GetNumVal{\BthreeEnergyDissipated}{\ExpBthreeNum}{energy_dissipated}{1}

\GetNumVal{\BfourEnergyDissipated}{\ExpBfourNum}{energy_dissipated}{1}

\GetNumVal{\AtwoEnergyInjected}{\ExpAtwoNum}{energy_injected}{1}

\GetNumVal{\AtwoEnergyInjectedAvg}{\ExpAtwoNumAvg}{energy_injected}{1}

\GetNumVal{\AthreeEnergyInjected}{\ExpAthreeNum}{energy_injected}{1}

\GetNumVal{\AthreeEnergyInjectedAvg}{\ExpAthreeNumAvg}{energy_injected}{1}

\GetNumVal{\BthreeEnergyInjected}{\ExpBthreeNum}{energy_injected}{1}

\GetNumVal{\BfourEnergyInjected}{\ExpBfourNum}{energy_injected}{1}

\GetNumVal{\AoneHandPathLength}{\ExpAoneNum}{handPathLength}{1}

\GetNumVal{\AoneHandPathLengthAvg}{\ExpAoneNumAvg}{handPathLength}{1}

\GetNumVal{\AtwoHandPathLength}{\ExpAtwoNum}{handPathLength}{1}

\GetNumVal{\AtwoHandPathLengthAvg}{\ExpAtwoNumAvg}{handPathLength}{1}

\GetNumVal{\AthreeHandPathLength}{\ExpAthreeNum}{handPathLength}{1}

\GetNumVal{\AthreeHandPathLengthAvg}{\ExpAthreeNumAvg}{handPathLength}{1}

\GetNumVal{\BoneHandPathLength}{\ExpBoneNum}{handPathLength}{1}

\GetNumVal{\BtwoHandPathLength}{\ExpBtwoNum}{handPathLength}{1}

\GetNumVal{\BthreeHandPathLength}{\ExpBthreeNum}{handPathLength}{1}

\GetNumVal{\BfourHandPathLength}{\ExpBfourNum}{handPathLength}{1}

\GetNumVal{\BaseHandPathLength}{\ExpBaseNum}{handPathLength}{1}

\GetNumVal{\AoneHandChamferSym}{\ExpAoneNum}{hand_chamferSym}{1}

\GetNumVal{\AtwoHandChamferSym}{\ExpAtwoNum}{hand_chamferSym}{1}

\GetNumVal{\AthreeHandChamferSym}{\ExpAthreeNum}{hand_chamferSym}{1}

\GetNumVal{\BoneHandChamferSym}{\ExpBoneNum}{hand_chamferSym}{1}

\GetNumVal{\BtwoHandChamferSym}{\ExpBtwoNum}{hand_chamferSym}{1}

\GetNumVal{\BthreeHandChamferSym}{\ExpBthreeNum}{hand_chamferSym}{1}

\GetNumVal{\BfourHandChamferSym}{\ExpBfourNum}{hand_chamferSym}{1}

\GetNumVal{\BaseHandChamferSym}{\ExpBaseNum}{hand_chamferSym}{1}

\GetNumVal{\AoneHandFrechetDist}{\ExpAoneNum}{hand_frechetDist}{1}

\GetNumVal{\AoneHandFrechetDistAvg}{\ExpAoneNumAvg}{hand_frechetDist}{1}

\GetNumVal{\AtwoHandFrechetDist}{\ExpAtwoNum}{hand_frechetDist}{1}

\GetNumVal{\AtwoHandFrechetDistAvg}{\ExpAtwoNumAvg}{hand_frechetDist}{1}

\GetNumVal{\AthreeHandFrechetDist}{\ExpAthreeNum}{hand_frechetDist}{1}

\GetNumVal{\AthreeHandFrechetDistAvg}{\ExpAthreeNumAvg}{hand_frechetDist}{1}

\GetNumVal{\BoneHandFrechetDist}{\ExpBoneNum}{hand_frechetDist}{1}

\GetNumVal{\BtwoHandFrechetDist}{\ExpBtwoNum}{hand_frechetDist}{1}

\GetNumVal{\BthreeHandFrechetDist}{\ExpBthreeNum}{hand_frechetDist}{1}

\GetNumVal{\BfourHandFrechetDist}{\ExpBfourNum}{hand_frechetDist}{1}

\GetNumVal{\BaseHandFrechetDist}{\ExpBaseNum}{hand_frechetDist}{1}

\GetNumVal{\AoneHandMaxError}{\ExpAoneNum}{hand_maxError}{1}

\GetNumVal{\AoneHandMaxErrorAvg}{\ExpAoneNumAvg}{hand_maxError}{1}

\GetNumVal{\AtwoHandMaxError}{\ExpAtwoNum}{hand_maxError}{1}

\GetNumVal{\AtwoHandMaxErrorAvg}{\ExpAtwoNumAvg}{hand_maxError}{1}

\GetNumVal{\AthreeHandMaxError}{\ExpAthreeNum}{hand_maxError}{1}

\GetNumVal{\AthreeHandMaxErrorAvg}{\ExpAthreeNumAvg}{hand_maxError}{1}

\GetNumVal{\BoneHandMaxError}{\ExpBoneNum}{hand_maxError}{1}

\GetNumVal{\BtwoHandMaxError}{\ExpBtwoNum}{hand_maxError}{1}

\GetNumVal{\BthreeHandMaxError}{\ExpBthreeNum}{hand_maxError}{1}

\GetNumVal{\BfourHandMaxError}{\ExpBfourNum}{hand_maxError}{1}

\GetNumVal{\BaseHandMaxError}{\ExpBaseNum}{hand_maxError}{1}

\GetNumVal{\AoneHandRmsError}{\ExpAoneNum}{hand_rmsError}{1}

\GetNumVal{\AoneHandRmsErrorAvg}{\ExpAoneNumAvg}{hand_rmsError}{1}

\GetNumVal{\AtwoHandRmsError}{\ExpAtwoNum}{hand_rmsError}{1}

\GetNumVal{\AtwoHandRmsErrorAvg}{\ExpAtwoNumAvg}{hand_rmsError}{1}

\GetNumVal{\AthreeHandRmsError}{\ExpAthreeNum}{hand_rmsError}{1}

\GetNumVal{\AthreeHandRmsErrorAvg}{\ExpAthreeNumAvg}{hand_rmsError}{1}

\GetNumVal{\BoneHandRmsError}{\ExpBoneNum}{hand_rmsError}{1}

\GetNumVal{\BtwoHandRmsError}{\ExpBtwoNum}{hand_rmsError}{1}

\GetNumVal{\BthreeHandRmsError}{\ExpBthreeNum}{hand_rmsError}{1}

\GetNumVal{\BfourHandRmsError}{\ExpBfourNum}{hand_rmsError}{1}

\GetNumVal{\BaseHandRmsError}{\ExpBaseNum}{hand_rmsError}{1}

\GetNumVal{\AoneManipArmijBeta}{\ConstAone}{manipArmijBeta}{1}
\GetNumVal{\AoneManipArmijBetaAvg}{\ConstAoneAvg}{manipArmijBeta}{1}
\GetNumVal{\AtwoManipArmijBeta}{\ConstAtwo}{manipArmijBeta}{1}
\GetNumVal{\AtwoManipArmijBetaAvg}{\ConstAtwoAvg}{manipArmijBeta}{1}
\GetNumVal{\AthreeManipArmijBeta}{\ConstAthree}{manipArmijBeta}{1}
\GetNumVal{\AthreeManipArmijBetaAvg}{\ConstAthreeAvg}{manipArmijBeta}{1}
\GetNumVal{\BoneManipArmijBeta}{\ConstBone}{manipArmijBeta}{1}
\GetNumVal{\BtwoManipArmijBeta}{\ConstBtwo}{manipArmijBeta}{1}
\GetNumVal{\BthreeManipArmijBeta}{\ConstBthree}{manipArmijBeta}{1}
\GetNumVal{\BfourManipArmijBeta}{\ConstBfour}{manipArmijBeta}{1}
\GetNumVal{\BaseManipArmijBeta}{\ConstBase}{manipArmijBeta}{1}

\GetNumVal{\AoneManipArmijKmax}{\ConstAone}{manipArmijKmax}{1}
\GetNumVal{\AoneManipArmijKmaxAvg}{\ConstAoneAvg}{manipArmijKmax}{1}
\GetNumVal{\AtwoManipArmijKmax}{\ConstAtwo}{manipArmijKmax}{1}
\GetNumVal{\AtwoManipArmijKmaxAvg}{\ConstAtwoAvg}{manipArmijKmax}{1}
\GetNumVal{\AthreeManipArmijKmax}{\ConstAthree}{manipArmijKmax}{1}
\GetNumVal{\AthreeManipArmijKmaxAvg}{\ConstAthreeAvg}{manipArmijKmax}{1}
\GetNumVal{\BoneManipArmijKmax}{\ConstBone}{manipArmijKmax}{1}
\GetNumVal{\BtwoManipArmijKmax}{\ConstBtwo}{manipArmijKmax}{1}
\GetNumVal{\BthreeManipArmijKmax}{\ConstBthree}{manipArmijKmax}{1}
\GetNumVal{\BfourManipArmijKmax}{\ConstBfour}{manipArmijKmax}{1}
\GetNumVal{\BaseManipArmijKmax}{\ConstBase}{manipArmijKmax}{1}

\GetNumVal{\AoneManipArmijKmin}{\ConstAone}{manipArmijKmin}{1}
\GetNumVal{\AoneManipArmijKminAvg}{\ConstAoneAvg}{manipArmijKmin}{1}
\GetNumVal{\AtwoManipArmijKmin}{\ConstAtwo}{manipArmijKmin}{1}
\GetNumVal{\AtwoManipArmijKminAvg}{\ConstAtwoAvg}{manipArmijKmin}{1}
\GetNumVal{\AthreeManipArmijKmin}{\ConstAthree}{manipArmijKmin}{1}
\GetNumVal{\AthreeManipArmijKminAvg}{\ConstAthreeAvg}{manipArmijKmin}{1}
\GetNumVal{\BoneManipArmijKmin}{\ConstBone}{manipArmijKmin}{1}
\GetNumVal{\BtwoManipArmijKmin}{\ConstBtwo}{manipArmijKmin}{1}
\GetNumVal{\BthreeManipArmijKmin}{\ConstBthree}{manipArmijKmin}{1}
\GetNumVal{\BfourManipArmijKmin}{\ConstBfour}{manipArmijKmin}{1}
\GetNumVal{\BaseManipArmijKmin}{\ConstBase}{manipArmijKmin}{1}

\GetNumVal{\AoneManipWeightMaxGain}{\ConstAone}{manipWeightMaxGain}{1}
\GetNumVal{\AoneManipWeightMaxGainAvg}{\ConstAoneAvg}{manipWeightMaxGain}{1}
\GetNumVal{\AtwoManipWeightMaxGain}{\ConstAtwo}{manipWeightMaxGain}{1}
\GetNumVal{\AtwoManipWeightMaxGainAvg}{\ConstAtwoAvg}{manipWeightMaxGain}{1}
\GetNumVal{\AthreeManipWeightMaxGain}{\ConstAthree}{manipWeightMaxGain}{1}
\GetNumVal{\AthreeManipWeightMaxGainAvg}{\ConstAthreeAvg}{manipWeightMaxGain}{1}
\GetNumVal{\BoneManipWeightMaxGain}{\ConstBone}{manipWeightMaxGain}{1}
\GetNumVal{\BtwoManipWeightMaxGain}{\ConstBtwo}{manipWeightMaxGain}{1}
\GetNumVal{\BthreeManipWeightMaxGain}{\ConstBthree}{manipWeightMaxGain}{1}
\GetNumVal{\BfourManipWeightMaxGain}{\ConstBfour}{manipWeightMaxGain}{1}
\GetNumVal{\BaseManipWeightMaxGain}{\ConstBase}{manipWeightMaxGain}{1}

\GetNumVal{\AoneOInit}{\ConstAone}{oInit}{1}
\GetNumVal{\AoneOInitAvg}{\ConstAoneAvg}{oInit}{1}
\GetNumVal{\AtwoOInit}{\ConstAtwo}{oInit}{1}
\GetNumVal{\AtwoOInitAvg}{\ConstAtwoAvg}{oInit}{1}
\GetNumVal{\AthreeOInit}{\ConstAthree}{oInit}{1}
\GetNumVal{\AthreeOInitAvg}{\ConstAthreeAvg}{oInit}{1}
\GetNumVal{\BoneOInit}{\ConstBone}{oInit}{1}
\GetNumVal{\BtwoOInit}{\ConstBtwo}{oInit}{1}
\GetNumVal{\BthreeOInit}{\ConstBthree}{oInit}{1}
\GetNumVal{\BfourOInit}{\ConstBfour}{oInit}{1}
\GetNumVal{\BaseOInit}{\ConstBase}{oInit}{1}

\GetNumVal{\AonePFinal}{\ConstAone}{pFinal}{1}
\GetNumVal{\AonePFinalAvg}{\ConstAoneAvg}{pFinal}{1}
\GetNumVal{\AtwoPFinal}{\ConstAtwo}{pFinal}{1}
\GetNumVal{\AtwoPFinalAvg}{\ConstAtwoAvg}{pFinal}{1}
\GetNumVal{\AthreePFinal}{\ConstAthree}{pFinal}{1}
\GetNumVal{\AthreePFinalAvg}{\ConstAthreeAvg}{pFinal}{1}
\GetNumVal{\BonePFinal}{\ConstBone}{pFinal}{1}
\GetNumVal{\BtwoPFinal}{\ConstBtwo}{pFinal}{1}
\GetNumVal{\BthreePFinal}{\ConstBthree}{pFinal}{1}
\GetNumVal{\BfourPFinal}{\ConstBfour}{pFinal}{1}
\GetNumVal{\BasePFinal}{\ConstBase}{pFinal}{1}

\GetNumVal{\AonePInit}{\ConstAone}{pInit}{1}
\GetNumVal{\AonePInitAvg}{\ConstAoneAvg}{pInit}{1}
\GetNumVal{\AtwoPInit}{\ConstAtwo}{pInit}{1}
\GetNumVal{\AtwoPInitAvg}{\ConstAtwoAvg}{pInit}{1}
\GetNumVal{\AthreePInit}{\ConstAthree}{pInit}{1}
\GetNumVal{\AthreePInitAvg}{\ConstAthreeAvg}{pInit}{1}
\GetNumVal{\BonePInit}{\ConstBone}{pInit}{1}
\GetNumVal{\BtwoPInit}{\ConstBtwo}{pInit}{1}
\GetNumVal{\BthreePInit}{\ConstBthree}{pInit}{1}
\GetNumVal{\BfourPInit}{\ConstBfour}{pInit}{1}
\GetNumVal{\BasePInit}{\ConstBase}{pInit}{1}

\GetNumVal{\AonePowerPDLOneTot}{\ExpAoneNum}{power_PD_l1_tot}{1}

\GetNumVal{\AtwoPowerPDLOneTot}{\ExpAtwoNum}{power_PD_l1_tot}{1}

\GetNumVal{\AthreePowerPDLOneTot}{\ExpAthreeNum}{power_PD_l1_tot}{1}

\GetNumVal{\BonePowerPDLOneTot}{\ExpBoneNum}{power_PD_l1_tot}{1}

\GetNumVal{\BtwoPowerPDLOneTot}{\ExpBtwoNum}{power_PD_l1_tot}{1}

\GetNumVal{\BthreePowerPDLOneTot}{\ExpBthreeNum}{power_PD_l1_tot}{1}

\GetNumVal{\BfourPowerPDLOneTot}{\ExpBfourNum}{power_PD_l1_tot}{1}

\GetNumVal{\BasePowerPDLOneTot}{\ExpBaseNum}{power_PD_l1_tot}{1}

\GetNumVal{\AtwoPowerDampLOneTot}{\ExpAtwoNum}{power_damp_l1_tot}{1}

\GetNumVal{\AthreePowerDampLOneTot}{\ExpAthreeNum}{power_damp_l1_tot}{1}

\GetNumVal{\BthreePowerDampLOneTot}{\ExpBthreeNum}{power_damp_l1_tot}{1}

\GetNumVal{\BfourPowerDampLOneTot}{\ExpBfourNum}{power_damp_l1_tot}{1}

\GetNumVal{\AonePowerFinalLOneAvg}{\ExpAoneNum}{power_final_l1_avg}{1}

\GetNumVal{\AonePowerFinalLOneAvgAvg}{\ExpAoneNumAvg}{power_final_l1_avg}{1}

\GetNumVal{\AtwoPowerFinalLOneAvg}{\ExpAtwoNum}{power_final_l1_avg}{1}

\GetNumVal{\AtwoPowerFinalLOneAvgAvg}{\ExpAtwoNumAvg}{power_final_l1_avg}{1}

\GetNumVal{\AthreePowerFinalLOneAvg}{\ExpAthreeNum}{power_final_l1_avg}{1}

\GetNumVal{\AthreePowerFinalLOneAvgAvg}{\ExpAthreeNumAvg}{power_final_l1_avg}{1}

\GetNumVal{\BonePowerFinalLOneAvg}{\ExpBoneNum}{power_final_l1_avg}{1}

\GetNumVal{\BtwoPowerFinalLOneAvg}{\ExpBtwoNum}{power_final_l1_avg}{1}

\GetNumVal{\BthreePowerFinalLOneAvg}{\ExpBthreeNum}{power_final_l1_avg}{1}

\GetNumVal{\BfourPowerFinalLOneAvg}{\ExpBfourNum}{power_final_l1_avg}{1}

\GetNumVal{\BasePowerFinalLOneAvg}{\ExpBaseNum}{power_final_l1_avg}{1}

\GetNumVal{\AonePowerFinalLOneMax}{\ExpAoneNum}{power_final_l1_max}{1}

\GetNumVal{\AonePowerFinalLOneMaxAvg}{\ExpAoneNumAvg}{power_final_l1_max}{1}

\GetNumVal{\AtwoPowerFinalLOneMax}{\ExpAtwoNum}{power_final_l1_max}{1}

\GetNumVal{\AtwoPowerFinalLOneMaxAvg}{\ExpAtwoNumAvg}{power_final_l1_max}{1}

\GetNumVal{\AthreePowerFinalLOneMax}{\ExpAthreeNum}{power_final_l1_max}{1}

\GetNumVal{\AthreePowerFinalLOneMaxAvg}{\ExpAthreeNumAvg}{power_final_l1_max}{1}

\GetNumVal{\BonePowerFinalLOneMax}{\ExpBoneNum}{power_final_l1_max}{1}

\GetNumVal{\BtwoPowerFinalLOneMax}{\ExpBtwoNum}{power_final_l1_max}{1}

\GetNumVal{\BthreePowerFinalLOneMax}{\ExpBthreeNum}{power_final_l1_max}{1}

\GetNumVal{\BfourPowerFinalLOneMax}{\ExpBfourNum}{power_final_l1_max}{1}

\GetNumVal{\BasePowerFinalLOneMax}{\ExpBaseNum}{power_final_l1_max}{1}

\GetNumVal{\AonePowerFinalLOneTot}{\ExpAoneNum}{power_final_l1_tot}{1}

\GetNumVal{\AtwoPowerFinalLOneTot}{\ExpAtwoNum}{power_final_l1_tot}{1}

\GetNumVal{\AthreePowerFinalLOneTot}{\ExpAthreeNum}{power_final_l1_tot}{1}

\GetNumVal{\BonePowerFinalLOneTot}{\ExpBoneNum}{power_final_l1_tot}{1}

\GetNumVal{\BtwoPowerFinalLOneTot}{\ExpBtwoNum}{power_final_l1_tot}{1}

\GetNumVal{\BthreePowerFinalLOneTot}{\ExpBthreeNum}{power_final_l1_tot}{1}

\GetNumVal{\BfourPowerFinalLOneTot}{\ExpBfourNum}{power_final_l1_tot}{1}

\GetNumVal{\BasePowerFinalLOneTot}{\ExpBaseNum}{power_final_l1_tot}{1}

\GetNumVal{\AthreePowerManipShapeLOneTot}{\ExpAthreeNum}{power_manipShape_l1_tot}{1}

\GetNumVal{\BfourPowerManipShapeLOneTot}{\ExpBfourNum}{power_manipShape_l1_tot}{1}

\GetNumVal{\AoneProcCovBlk}{\ConstAone}{procCovBlk}{1}
\GetNumVal{\AoneProcCovBlkAvg}{\ConstAoneAvg}{procCovBlk}{1}
\GetNumVal{\AtwoProcCovBlk}{\ConstAtwo}{procCovBlk}{1}
\GetNumVal{\AtwoProcCovBlkAvg}{\ConstAtwoAvg}{procCovBlk}{1}
\GetNumVal{\AthreeProcCovBlk}{\ConstAthree}{procCovBlk}{1}
\GetNumVal{\AthreeProcCovBlkAvg}{\ConstAthreeAvg}{procCovBlk}{1}
\GetNumVal{\BoneProcCovBlk}{\ConstBone}{procCovBlk}{1}
\GetNumVal{\BtwoProcCovBlk}{\ConstBtwo}{procCovBlk}{1}
\GetNumVal{\BthreeProcCovBlk}{\ConstBthree}{procCovBlk}{1}
\GetNumVal{\BfourProcCovBlk}{\ConstBfour}{procCovBlk}{1}
\GetNumVal{\BaseProcCovBlk}{\ConstBase}{procCovBlk}{1}

\GetNumVal{\AoneProcQc}{\ConstAone}{procQc}{1}
\GetNumVal{\AoneProcQcAvg}{\ConstAoneAvg}{procQc}{1}
\GetNumVal{\AtwoProcQc}{\ConstAtwo}{procQc}{1}
\GetNumVal{\AtwoProcQcAvg}{\ConstAtwoAvg}{procQc}{1}
\GetNumVal{\AthreeProcQc}{\ConstAthree}{procQc}{1}
\GetNumVal{\AthreeProcQcAvg}{\ConstAthreeAvg}{procQc}{1}
\GetNumVal{\BoneProcQc}{\ConstBone}{procQc}{1}
\GetNumVal{\BtwoProcQc}{\ConstBtwo}{procQc}{1}
\GetNumVal{\BthreeProcQc}{\ConstBthree}{procQc}{1}
\GetNumVal{\BfourProcQc}{\ConstBfour}{procQc}{1}
\GetNumVal{\BaseProcQc}{\ConstBase}{procQc}{1}

\GetNumVal{\AoneQInit}{\ConstAone}{qInit}{1}
\GetNumVal{\AoneQInitAvg}{\ConstAoneAvg}{qInit}{1}
\GetNumVal{\AtwoQInit}{\ConstAtwo}{qInit}{1}
\GetNumVal{\AtwoQInitAvg}{\ConstAtwoAvg}{qInit}{1}
\GetNumVal{\AthreeQInit}{\ConstAthree}{qInit}{1}
\GetNumVal{\AthreeQInitAvg}{\ConstAthreeAvg}{qInit}{1}
\GetNumVal{\BoneQInit}{\ConstBone}{qInit}{1}
\GetNumVal{\BtwoQInit}{\ConstBtwo}{qInit}{1}
\GetNumVal{\BthreeQInit}{\ConstBthree}{qInit}{1}
\GetNumVal{\BfourQInit}{\ConstBfour}{qInit}{1}
\GetNumVal{\BaseQInit}{\ConstBase}{qInit}{1}

\GetNumVal{\AoneSenseCovBlk}{\ConstAone}{senseCovBlk}{1}
\GetNumVal{\AoneSenseCovBlkAvg}{\ConstAoneAvg}{senseCovBlk}{1}
\GetNumVal{\AtwoSenseCovBlk}{\ConstAtwo}{senseCovBlk}{1}
\GetNumVal{\AtwoSenseCovBlkAvg}{\ConstAtwoAvg}{senseCovBlk}{1}
\GetNumVal{\AthreeSenseCovBlk}{\ConstAthree}{senseCovBlk}{1}
\GetNumVal{\AthreeSenseCovBlkAvg}{\ConstAthreeAvg}{senseCovBlk}{1}
\GetNumVal{\BoneSenseCovBlk}{\ConstBone}{senseCovBlk}{1}
\GetNumVal{\BtwoSenseCovBlk}{\ConstBtwo}{senseCovBlk}{1}
\GetNumVal{\BthreeSenseCovBlk}{\ConstBthree}{senseCovBlk}{1}
\GetNumVal{\BfourSenseCovBlk}{\ConstBfour}{senseCovBlk}{1}
\GetNumVal{\BaseSenseCovBlk}{\ConstBase}{senseCovBlk}{1}

\GetNumVal{\AoneSingulArmijBeta}{\ConstAone}{singulArmijBeta}{1}
\GetNumVal{\AoneSingulArmijBetaAvg}{\ConstAoneAvg}{singulArmijBeta}{1}
\GetNumVal{\AtwoSingulArmijBeta}{\ConstAtwo}{singulArmijBeta}{1}
\GetNumVal{\AtwoSingulArmijBetaAvg}{\ConstAtwoAvg}{singulArmijBeta}{1}
\GetNumVal{\AthreeSingulArmijBeta}{\ConstAthree}{singulArmijBeta}{1}
\GetNumVal{\AthreeSingulArmijBetaAvg}{\ConstAthreeAvg}{singulArmijBeta}{1}
\GetNumVal{\BoneSingulArmijBeta}{\ConstBone}{singulArmijBeta}{1}
\GetNumVal{\BtwoSingulArmijBeta}{\ConstBtwo}{singulArmijBeta}{1}
\GetNumVal{\BthreeSingulArmijBeta}{\ConstBthree}{singulArmijBeta}{1}
\GetNumVal{\BfourSingulArmijBeta}{\ConstBfour}{singulArmijBeta}{1}
\GetNumVal{\BaseSingulArmijBeta}{\ConstBase}{singulArmijBeta}{1}

\GetNumVal{\AoneSingulArmijKmax}{\ConstAone}{singulArmijKmax}{1}
\GetNumVal{\AoneSingulArmijKmaxAvg}{\ConstAoneAvg}{singulArmijKmax}{1}
\GetNumVal{\AtwoSingulArmijKmax}{\ConstAtwo}{singulArmijKmax}{1}
\GetNumVal{\AtwoSingulArmijKmaxAvg}{\ConstAtwoAvg}{singulArmijKmax}{1}
\GetNumVal{\AthreeSingulArmijKmax}{\ConstAthree}{singulArmijKmax}{1}
\GetNumVal{\AthreeSingulArmijKmaxAvg}{\ConstAthreeAvg}{singulArmijKmax}{1}
\GetNumVal{\BoneSingulArmijKmax}{\ConstBone}{singulArmijKmax}{1}
\GetNumVal{\BtwoSingulArmijKmax}{\ConstBtwo}{singulArmijKmax}{1}
\GetNumVal{\BthreeSingulArmijKmax}{\ConstBthree}{singulArmijKmax}{1}
\GetNumVal{\BfourSingulArmijKmax}{\ConstBfour}{singulArmijKmax}{1}
\GetNumVal{\BaseSingulArmijKmax}{\ConstBase}{singulArmijKmax}{1}

\GetNumVal{\AoneSingulArmijKmin}{\ConstAone}{singulArmijKmin}{1}
\GetNumVal{\AoneSingulArmijKminAvg}{\ConstAoneAvg}{singulArmijKmin}{1}
\GetNumVal{\AtwoSingulArmijKmin}{\ConstAtwo}{singulArmijKmin}{1}
\GetNumVal{\AtwoSingulArmijKminAvg}{\ConstAtwoAvg}{singulArmijKmin}{1}
\GetNumVal{\AthreeSingulArmijKmin}{\ConstAthree}{singulArmijKmin}{1}
\GetNumVal{\AthreeSingulArmijKminAvg}{\ConstAthreeAvg}{singulArmijKmin}{1}
\GetNumVal{\BoneSingulArmijKmin}{\ConstBone}{singulArmijKmin}{1}
\GetNumVal{\BtwoSingulArmijKmin}{\ConstBtwo}{singulArmijKmin}{1}
\GetNumVal{\BthreeSingulArmijKmin}{\ConstBthree}{singulArmijKmin}{1}
\GetNumVal{\BfourSingulArmijKmin}{\ConstBfour}{singulArmijKmin}{1}
\GetNumVal{\BaseSingulArmijKmin}{\ConstBase}{singulArmijKmin}{1}

\GetNumVal{\AoneSingulDownsampling}{\ConstAone}{singulDownsampling}{1}
\GetNumVal{\AoneSingulDownsamplingAvg}{\ConstAoneAvg}{singulDownsampling}{1}
\GetNumVal{\AtwoSingulDownsampling}{\ConstAtwo}{singulDownsampling}{1}
\GetNumVal{\AtwoSingulDownsamplingAvg}{\ConstAtwoAvg}{singulDownsampling}{1}
\GetNumVal{\AthreeSingulDownsampling}{\ConstAthree}{singulDownsampling}{1}
\GetNumVal{\AthreeSingulDownsamplingAvg}{\ConstAthreeAvg}{singulDownsampling}{1}
\GetNumVal{\BoneSingulDownsampling}{\ConstBone}{singulDownsampling}{1}
\GetNumVal{\BtwoSingulDownsampling}{\ConstBtwo}{singulDownsampling}{1}
\GetNumVal{\BthreeSingulDownsampling}{\ConstBthree}{singulDownsampling}{1}
\GetNumVal{\BfourSingulDownsampling}{\ConstBfour}{singulDownsampling}{1}
\GetNumVal{\BaseSingulDownsampling}{\ConstBase}{singulDownsampling}{1}

\GetNumVal{\AoneSysRealQdotLTwoAvg}{\ExpAoneNum}{sys_real_qdot_l2_avg}{1}

\GetNumVal{\AoneSysRealQdotLTwoAvgAvg}{\ExpAoneNumAvg}{sys_real_qdot_l2_avg}{1}

\GetNumVal{\AtwoSysRealQdotLTwoAvg}{\ExpAtwoNum}{sys_real_qdot_l2_avg}{1}

\GetNumVal{\AtwoSysRealQdotLTwoAvgAvg}{\ExpAtwoNumAvg}{sys_real_qdot_l2_avg}{1}

\GetNumVal{\AthreeSysRealQdotLTwoAvg}{\ExpAthreeNum}{sys_real_qdot_l2_avg}{1}

\GetNumVal{\AthreeSysRealQdotLTwoAvgAvg}{\ExpAthreeNumAvg}{sys_real_qdot_l2_avg}{1}

\GetNumVal{\BoneSysRealQdotLTwoAvg}{\ExpBoneNum}{sys_real_qdot_l2_avg}{1}

\GetNumVal{\BtwoSysRealQdotLTwoAvg}{\ExpBtwoNum}{sys_real_qdot_l2_avg}{1}

\GetNumVal{\BthreeSysRealQdotLTwoAvg}{\ExpBthreeNum}{sys_real_qdot_l2_avg}{1}

\GetNumVal{\BfourSysRealQdotLTwoAvg}{\ExpBfourNum}{sys_real_qdot_l2_avg}{1}

\GetNumVal{\BaseSysRealQdotLTwoAvg}{\ExpBaseNum}{sys_real_qdot_l2_avg}{1}

\GetNumVal{\AoneSysRealQdotLTwoMax}{\ExpAoneNum}{sys_real_qdot_l2_max}{1}

\GetNumVal{\AoneSysRealQdotLTwoMaxAvg}{\ExpAoneNumAvg}{sys_real_qdot_l2_max}{1}

\GetNumVal{\AtwoSysRealQdotLTwoMax}{\ExpAtwoNum}{sys_real_qdot_l2_max}{1}

\GetNumVal{\AtwoSysRealQdotLTwoMaxAvg}{\ExpAtwoNumAvg}{sys_real_qdot_l2_max}{1}

\GetNumVal{\AthreeSysRealQdotLTwoMax}{\ExpAthreeNum}{sys_real_qdot_l2_max}{1}

\GetNumVal{\AthreeSysRealQdotLTwoMaxAvg}{\ExpAthreeNumAvg}{sys_real_qdot_l2_max}{1}

\GetNumVal{\BoneSysRealQdotLTwoMax}{\ExpBoneNum}{sys_real_qdot_l2_max}{1}

\GetNumVal{\BtwoSysRealQdotLTwoMax}{\ExpBtwoNum}{sys_real_qdot_l2_max}{1}

\GetNumVal{\BthreeSysRealQdotLTwoMax}{\ExpBthreeNum}{sys_real_qdot_l2_max}{1}

\GetNumVal{\BfourSysRealQdotLTwoMax}{\ExpBfourNum}{sys_real_qdot_l2_max}{1}

\GetNumVal{\BaseSysRealQdotLTwoMax}{\ExpBaseNum}{sys_real_qdot_l2_max}{1}

\GetNumVal{\AoneWeightMaxGainJoints}{\ConstAone}{weightMaxGainJoints}{1}
\GetNumVal{\AoneWeightMaxGainJointsAvg}{\ConstAoneAvg}{weightMaxGainJoints}{1}
\GetNumVal{\AtwoWeightMaxGainJoints}{\ConstAtwo}{weightMaxGainJoints}{1}
\GetNumVal{\AtwoWeightMaxGainJointsAvg}{\ConstAtwoAvg}{weightMaxGainJoints}{1}
\GetNumVal{\AthreeWeightMaxGainJoints}{\ConstAthree}{weightMaxGainJoints}{1}
\GetNumVal{\AthreeWeightMaxGainJointsAvg}{\ConstAthreeAvg}{weightMaxGainJoints}{1}
\GetNumVal{\BoneWeightMaxGainJoints}{\ConstBone}{weightMaxGainJoints}{1}
\GetNumVal{\BtwoWeightMaxGainJoints}{\ConstBtwo}{weightMaxGainJoints}{1}
\GetNumVal{\BthreeWeightMaxGainJoints}{\ConstBthree}{weightMaxGainJoints}{1}
\GetNumVal{\BfourWeightMaxGainJoints}{\ConstBfour}{weightMaxGainJoints}{1}
\GetNumVal{\BaseWeightMaxGainJoints}{\ConstBase}{weightMaxGainJoints}{1}

\newcommand{\csvConst}[2]{\csvConstBfour{#1}{#2}}

\allowdisplaybreaks

\begin{document}
\emergencystretch 1.0em

\begin{frontmatter}

\title{
Active Defense Against False Data Injection Attacks in Robotic Manipulators
}

\thanks{This work was supported by the Knowledge Foundation (KKS).}

\author[First]{Gabriele Gualandi}
\author[First]{Carl Mikael Larsson}
\author[First]{Alessandro V. Papadopoulos}

\address[First]{M{\"a}lardalen University, V{\"a}ster{\aa}s, Sweden\\(e-mail: gabriele.gualandi@mdu.se).}%, cln20001@student.mdu.se, alessandro.papadopoulos@mdu.se).}

\maketitle

\begin{abstract}
Robotic systems are vulnerable to \emph{False Data Injection Attacks} (FDIAs), where adversaries corrupt sensor signals to gain malicious control.
Feedback linearization exposes robotic systems to \emph{integrator vulnerability}, making them susceptible to \emph{stealthy} attacks that can cause significant deviations in end-effector behavior without raising alarms.
This paper addresses the resilience of manipulators against finite-horizon FDIAs by formalizing two defense methods, namely \emph{anomaly-aware virtual damping} and \emph{manipulability reduction}, with probabilistic guarantees on nominal task execution. 
Simulations on a $7$-DOF redundant manipulator show that the proposed defenses substantially reduce the impact of FDIA compared to using solely a threshold-based ADS like the $\chi^2$, while preserving nominal task performance in the absence of attack.
\end{abstract}
\end{frontmatter}

\emergencystretch 1.0em % Prevent overflow of text

\section{Introduction}\label{sec:intro}
Robotic manipulators are increasingly deployed in open and networked environments, ranging from industrial assembly to collaborative human-robot interaction.  
Their tight integration of computation, communication, and actuation, however, makes them vulnerable to cyberattacks that directly compromise safety and reliability.  
Among cyber-physical threats, attacks on \emph{data integrity} are particularly critical. By corrupting sensor information, an adversary can mislead the controller and alter the robot's behavior without any physical contact~\citep{humayed_cyber-physical_2017, sandberg_secure_2022}.  
Such threats are especially concerning when they are \emph{stealthy}, i.e., engineered to remain below the threshold of an Anomaly Detection System (ADS), thereby avoiding detection while still driving the end-effector away from its intended task~\citep{guo_optimal_2017, intriago_residual-based_2024}.

A well-studied class of integrity attacks are \emph{False Data Injection Attacks} (FDIAs) targeting sensing, in which adversaries manipulate sensor signals to achieve malicious objectives.  
In networked control and power systems, stealthy FDIAs have been extensively analyzed: from characterizations of undetectability~\citep{mo_secure_2010, ueda_affine_2024, fawzi_secure_2014} to optimal attack synthesis.  
In robotics, however, prior work has primarily focused on passive anomaly detection or on ``perfectly undetectable'' attacks that completely bypass detection~\citep{ueda_affine_2024}.  
One perspective that has been largely overlooked relates to a key robotics-specific vulnerability: the widespread use of \emph{feedback linearization} reduces manipulator dynamics to double integrators, which in turn introduces an \emph{integrator vulnerability}~\citep{tosun2025kullback}.  
As a consequence, persistent sensor corruption can silently accumulate in the closed-loop system.

\textbf{This paper addresses this gap by introducing two \emph{active defense} strategies that limit the impact of finite-horizon FDIAs targeting sensors for feedback-linearized robotic arms.}
The proposed defense methods leverage an \emph{actuation-projected anomaly score} that ignores sensing and hence is immune to sensor corruption.
This anomaly score is used in two ways. Firstly, we introduce \emph{virtual damping} for the joints, to dissipate kinetic energy as anomalies grow, thereby reducing the adversary's ability to steer the manipulator.
Secondly, we reconfigure the manipulator's null space to \emph{reduce manipulability} in the estimated attack hand direction, which increases the joint-level anomaly required for a given hand-level displacement. Since the strength of the virtual damping and the detection of a standard $\chi^2$ ADS are both driven by joint-level anomaly, null-space reconfiguration can further limit the impact of FDIAs.

The contributions of this work are the following:
\begin{enumerate}
    \item We formalize stealthy FDIAs against feedback-linearized manipulators, exposing their integrator vulnerability, and show that the attacker's one-step optimal strategy reduces to a convex QCQP.
    
    \item We propose \emph{anomaly-aware virtual damping}, which attenuates joint velocities based on a measurement-free actuation-projected state predictor, with probabilistic guarantees on bounded attenuation in nominal operation. Furthermore, we propose to \emph{decrease manipulability} in the estimated attack direction 
    to further reduce the impact of FDIAs, with negligible impact on task performance in nominal operation.
    
    \item Simulations on a $7$-DOF real-world robot show that the defenses significantly limit FDIAs compared to threshold-based ADSs like the $\chi^2$.
\end{enumerate}

\subsection{Related Work}\label{sec:related_work}

%\textbf{Anomaly detection in CPS.}  
Threshold-based detectors, particularly $\chi^2$ tests on Kalman innovations, are a classical tool for monitoring CPS integrity \citep{ding_survey_2018}.  
These methods provide statistical guarantees on false-alarm rates and are widely used in industrial practice.  
Extensions include adaptive thresholds \citep{tunga2018tuning} and sequential schemes such as CUSUM tests \citep{murguia_cusum_2016}.  
However, such ADS are inherently \emph{passive}: they may detect anomalies but do not alter the control policy, leaving the system structure unchanged.  

%\textbf{False Data Injection Attacks.}  
Theoretical studies of FDIAs in networked systems have characterized undetectability conditions \citep{mo_secure_2010},
affine attack structures \citep{ueda_affine_2024}, and optimal attack strategies \citep{guo_optimal_2017}.  
These works typically assume either ``perfectly undetectable'' attacks (no residual information leaks) or focus on power networks and generic LTI plants.
Robotics-specific studies remain limited: most works concentrate on detector design rather than on modifying the controller to actively limit attack effectiveness \citep{intriago_residual-based_2024}.  

%\textbf{Integrator Vulnerability.}
Control systems with an LTI plant having integral action are subject to \emph{integrator vulnerability}~\citep{tosun2025kullback}, where constant attack injections get absorbed by any linear observer, resulting in attacks only being detectable during transients. Classical passive threshold-based detection cannot prevent a stealthy FDIA from increasingly deceiving a state estimator.

In~\citep{gualandi2026icra}, we demonstrated that feedback linearization induces \emph{integrator vulnerability} in robotic systems, and we proposed gain scaling based on a state-projected (sensing-free) anomaly score to limit the accumulation of kinetic energy during an FDIA. While gain scaling mitigates this energy accumulation, it is fundamentally unable to dissipate it. Therefore, this work introduces a novel \emph{virtual damping} strategy designed to actively dissipate the kinetic energy generated by an FDIA, while maintaining probabilistic performance guarantees during nominal operation. Furthermore, we propose directional \emph{manipulability reduction} to compel the adversary to increase its signal injection for a given task-space displacement, thereby further curtailing its control authority. This establishes kinematic redundancy not only as a mechanism for performance enhancement but also as a tool for cyber-physical security. Although redundant manipulators have long leveraged null-space projections for secondary tasks, such as obstacle clearance~\citep{siciliano1991null}, their application to adversarial robustness remains largely unexplored. To the best of our knowledge, exploiting redundancy to \emph{reduce manipulability along an adversarial direction} as an active security defense has not been previously investigated.

\section{System Model}\label{sec:system_model}
\subsection{Closed-Loop Architecture}
We consider a robotic manipulator operating in closed loop with a state estimator and a task-space controller.
At each discrete time step $k \in \mathbb{Z}_{\ge 0}$, the plant output
$\bm{y}_k$
is corrupted by injected signal $\bm{a}_k$ to yield measurement
\begin{equation}    \label{eq:measurement_model}
    \tilde{\bm{y}}_k = \bm{y}_k + \bm{a}_k.
\end{equation}
The plant has discrete-time linear dynamics with input saturation
\begin{subequations}
\begin{align}
    \bm{x}_{k+1} &= \bm{A}\bm{x}_k + \bm{B} \, \Sat{\bm{u}_k} + \bm{w}_k, \label{eq:lti_plantX} \\
    \bm{y}_k &= \bm{C}\bm{x}_k + \bm{v}_k, \label{eq:lti_plantY}
\end{align}
\end{subequations}
with state
$\bm{x}_k$,
process noise 
$\bm{w}_k \sim \mathcal{N}(\bm{0},\bm{Q})$, measurement noise 
$\bm{v}_k \sim \mathcal{N}(\bm{0},\bm{R})$, and input 
$\bm{u}_k$
saturated by the $\Sat{\cdot}$ operator within limits
$[\ActuationMin, \ActuationMax]$, $\ActuationMin < \bm{0}, \ActuationMax > \bm{0}$.\\
We assume $\bm{w}_k$ and $\bm{v}_k$ are white, mutually independent, and independent of $\bm{x}_0$, with $\bm{Q}\succeq 0$, $\bm{R}\succ 0$.
State estimates are obtained from a steady-state Kalman filter in single-step innovation form:
\begin{subequations}
\begin{align}
    \hat{\bm{x}}_{k+1} &= \bm{A}\hat{\bm{x}}_{k} + \bm{B}\bm{u}_k + \KalmanGain\,\bm{r}_k, \label{eq:kalmanX} \\
    \bm{r}_k &= \tilde{\bm{y}}_{k} - \bm{C}\hat{\bm{x}}_{k}, \label{eq:kalmanR}
\end{align}
\end{subequations}
where $\bm{r}_k$ is the innovation (residual). 
We assume $(\bm{A},\bm{C})$ is detectable,
ensuring the existence of a unique stabilizing solution $\bm{P}$ to the DARE
\begin{equation}\label{eq:DARE}
    \bm{P} = \bm{A}\bm{P}\bm{A}^\top + \bm{Q}
    - \bm{A}\bm{P}\bm{C}^\top\!\left(\bm{C}\bm{P}\bm{C}^\top + \bm{R}\right)^{-1}\!
    \bm{C}\bm{P}\bm{A}^\top.
\end{equation}
Here $\bm{P}$ is the steady-state covariance of the signal {${\bm{e}_k = \bm{x}_k - \hat{\bm{x}}_k}$}. The steady-state covariance of $\bm{r}_k$ is 
\begin{equation}\label{eq:sigma}
\bm{\Sigma} = \bm{C}\bm{P}\bm{C}^\top + \bm{R}  \succ 0   
\end{equation}
(hence invertible), and the 
steady-state gain is
\begin{equation}
    \KalmanGain = \bm{A}\,\bm{P}\,\bm{C}^\top\,\bm{\Sigma}^{-1}.
\end{equation}
In the remainder of the paper, symbols with the hat $\bm{\hat{\cdot}}$ \, are quantities determined using the estimated state $\hat{\bm{x}}$, which is subject to FDIA.
\subsection{Manipulator Model and Task-Space Control}
We consider an $n$-DOF robotic manipulator with joint variable $\bm{q} \in \mathbb{R}^n$,
state {${\bm{x}_k = [\bm{q}_k^\top , \dot{\bm{q}}_k^\top]^\top \in \mathbb{R}^{2n}}$} and
measurement vector 
$\bm{y}_k \in \mathbb{R}^p$ (assuming full state measurement, $p=2n$).
The continuous-time joint-space dynamics with joint torques $\bm{\tau}$ is:
\begin{equation}
    \inertiaMatrix(\bm{q})\,\ddot{\bm{q}} + \bm{\nu}(\bm{q},\dot{\bm{q}}) = \bm{\tau},
    \label{eq:manipulator_dynamics}
\end{equation}
where $\inertiaMatrix(\bm{q})\succ 0$ is the inertia matrix and $\bm{\nu}$ collects Coriolis, centrifugal, and gravity terms.
With inverse-dynamics compensation $\bm{\tau}=\inertiaMatrix(\bm{q})\,\bm{u}+\bm{\nu}(\bm{q},\dot{\bm{q}})$, the joint dynamics reduce to decoupled double integrators (\textit{virtual masses system}), $\ddot{\bm{q}} = \bm{u}$,
whose discrete-time form is \eqref{eq:lti_plantX} and whose residual is $\bm{r}_k \in\mathbb{R}^{p}$ from \eqref{eq:kalmanR}.

The controller operates in task space using twist control 
\citep{siciliano_robotics_2010},
tracking position $\{\TargetPosition_k,\dot{\TargetPosition}_k,\ddot{\TargetPosition}_k\}$ and orientation $\{\TargetRotationMatrix_k,\TargetAngularVelocity_k,\dot{\TargetAngularVelocity}_k\}$ references,
with $\TargetRotationMatrix_k\in SO(3)$, $\TargetPosition_k \in\mathbb{R}^3$ and
$\TargetAngularVelocity_k\in\mathbb{R}^3$. 
%%%
A PD+feedforward law provides task accelerations:
\begin{equation}
    \label{eq:pid_actuation_task}
    \bm{u}^{\text{PD}}_{k} = 
    \begin{bmatrix}
        \ddot{\TargetPosition}_k 
            + \GainProportionalPosition \,\ErrorPosition{k} 
            + \GainDerivativePosition \,\ErrorPositionDerivative{k} \\[6pt]
        \dot{\TargetAngularVelocity}_k 
            + \GainProportionalOrientation \,\ErrorOrientation{k} 
            + \GainDerivativeOrientation \,\ErrorAngular{k}
    \end{bmatrix},
\end{equation}
where:
{${\ErrorPosition{k} = \TargetPosition_k - \PositionEst_k}$} and
{${\ErrorOrientation{k} = \sin(\AngleAxisAngle_k / 2)\,\AngleAxisAxis_k}$} are respectively the position and orientation errors;
$\ErrorOrientation{k}$ is computed using the angle-axis pair $(\AngleAxisAngle_k,\AngleAxisAxis_k)$ from matrix $\TargetRotationMatrix_k\,\EstimatedRotationMatrix_k^\top, \, \AngleAxisAngle_k\in[0,\pi]$;
$\ErrorAngular{k} = \TargetAngularVelocity_k - \EstimatedAngularVelocity_k$; 
and the gains $\GainProportionalPosition, \GainDerivativePosition, \GainProportionalOrientation, \GainDerivativeOrientation$ are synthesized via discrete-time LQR.

Joint accelerations are computed via the weighted pseudoinverse $\hat{\GeometricJacobian}^{\star}_k$ of the geometric Jacobian $\GeometricJacobian_k$ as:
\newcommand{\pidActuationJoint}
 {\hat{\GeometricJacobian}^\star_k \, \big( \bm{u}^{\text{PD}}_{k} - \dot{\hat{\GeometricJacobian}}_k \, \dot{\hat{\bm{q}}}_k \big)
 }
\begin{equation}
    \label{eq:pid_actuation_joint}
    \bm{u}^{\text{nom}}_k = \pidActuationJoint
     \;-\; \nullSpaceDampingCoeff \,\big(\bm{I} - \hat{\GeometricJacobian}_\TimeStep^\dagger \hat{\GeometricJacobian}_\TimeStep\big)\, \dot{\hat{\bm{q}}}_k
     \;+\; \bm{u}^{\text{sec}}_k,
\end{equation}
where $\hat{\GeometricJacobian}_\TimeStep^\dagger$ is the Moore–Penrose inverse, and $\nullSpaceDampingCoeff > 0$. The terms $- \nullSpaceDampingCoeff \,\big(\bm{I} - \hat{\GeometricJacobian}_\TimeStep^\dagger \hat{\GeometricJacobian}_\TimeStep\big)\, \dot{\hat{\bm{q}}}_k$ (standard null-space damping), and $\bm{u}^{\text{sec}}_k$ (later used for an active defense), both lie in the null-space of $\hat{\GeometricJacobian}_k$, hence they do not interfere with the primary task in nominal conditions.

In the absence of active defenses, in \eqref{eq:lti_plantX} we set
$\bm{u}_k =\uNominal_k$, with $\bm{u}^{\text{sec}}_k = \bm{0}$ and $\hat{\GeometricJacobian}^{\star}_\TimeStep = \hat{\GeometricJacobian}_\TimeStep^\dagger$.

\subsection{$\chi^2$ Anomaly Detection System}\label{subsec:ADS}
The system incorporates a detector to distinguish between the null hypothesis of normal operation, $\mathcal{H}_0$, and the alternative hypothesis of an attack, $\mathcal{H}_1$. The $\chi^2$ detector monitors the statistic (Mahalanobis distance):
\begin{equation}
    z_k \coloneqq \bm{r}_k^\top \bm{\Sigma}^{-1} \bm{r}_k
    \label{eq:mahalanobis_distance}
\end{equation}
with $\bm{\Sigma}$ as in \eqref{eq:sigma} and $\bm{r}_k$ as in \eqref{eq:kalmanR}.
During execution, an alarm is triggered if $z_k > \tau$, where $\tau > 0$ is a parameter (\textit{threshold}).
Under $\mathcal{H}_0$, the residual {${\bm{r}_k \sim \mathcal{N}(\bm{0},\bm{\Sigma})}$}, hence $z_k$ follows a $\chi^2(p)$ distribution, where {${p=\dim(\bm{r}_k)}$} ~\citep{ding_survey_2018}.
Given a desired per-step false-alarm probability $\falseAlarmProb\in(0,1)$ (equivalently, a desired
mean inter-alarm interval {${\mathrm{ARL}=1/\falseAlarmProb}$}), setting
\begin{equation}
\tau \;=\; F^{-1}_{\chi^2(p)}(1-\falseAlarmProb)
\;=\; 2\, \mathrm{P}^{-1}\!\left(1-\falseAlarmProb,\dfrac{p}{2}\right),
\label{eq:desired_arl}
\end{equation}
 ensures that $\mathbb{P}(z_k > \tau \mid \mathcal{H}_0)=\falseAlarmProb$,
 where $F^{-1}_{\chi^2(p)}$ is the inverse CDF of the $\chi^2$ distribution with $p$ degrees of freedom, and $\mathrm{P}^{-1}$ denotes the inverse of the regularized lower incomplete gamma function \citep{tunga2018tuning}.

\subsection{Assumptions}\label{subsec:assumptions}

We now state the assumptions under which both the defense and attack strategies are developed.

\begin{assumption}[Adversary knowledge]\label{ass:omniscience}
The attacker has full knowledge of the plant, controller, state estimator, ADS and active defense system(s).
\end{assumption}

\begin{assumption}[Adversary capabilities]\label{ass:capab}
The only attack surface is additive FDIA targeting sensors, as in \eqref{eq:measurement_model}.
\end{assumption}

\begin{assumption}[Adversary goal]\label{ass:goal}
The attacker’s goal is to control the position of the end-effector in task space
while remaining undetected by any threshold-based ADS such as the $\chi^2$ (stealth FDIA).
\end{assumption}

\begin{assumption}[Converged estimator]\label{ass:converged}
Without loss of generality, the attack starts at $k=0$ and lasts at most $\AttackDuration$ samples. 
At $k=0$ the Kalman filter has converged (covariances and gains are constant).
\end{assumption}

\section{Proposed Defense Methods}\label{sec:defence}
\subsection{Attack Estimation}
We define the \emph{actuation-projected state} $\ActuationProjectedState_{k}$ as the measurement-free state prediction driven solely by commanded actuation. 
To mitigate the drift (random walk) from the actual state, this predictor is periodically re-synchronized with 
the Kalman estimate.
Without loss of generality, assume the most recent re-synchronization occurs at $k=0$, so
\begin{equation}
    \ActuationProjectedState_{0} = \hat{\bm{x}}_{0}.
    \label{eq:resync}
\end{equation}
For subsequent steps, $\ActuationProjectedState_{k}$ follows the open-loop predictor
\begin{equation}
    \ActuationProjectedState_{k+1} 
    = \bm{A}\,\ActuationProjectedState_{k} + \bm{B}\,\bm{u}_{k}, \qquad k \ge 0.
    \label{eq:actuation_projected_state}
\end{equation}
We define the \emph{actuation-projected residual} (distinct from the innovation $\bm{r}_k$) as
\begin{equation}
    \tilde{\bm{r}}_{k} \coloneqq \hat{\bm{x}}_{k} - \ActuationProjectedState_{k} \in \mathbb{R}^{2n}.
    \label{eq:residualProj}
\end{equation}

\begin{theorem}[Actuation-projected residual covariance]
	\label{thm:residual_cov}
Under $\mathcal{H}_0$ (no attack) the covariance of the residual in \eqref{eq:residualProj} after $k > 0$ steps from initialization, $\bm{\Sigma}_{\tilde{r}, k} \in \mathbb{R}^{2n \times 2n}$, is the $(2,2)$ block of the matrix $\augStateCov{k} \in \mathbb{R}^{4n \times 4n}$, where blocks have size $2n \times 2n $, obtained 
by iterating
	\begin{equation}\label{eq:covRecurs}
		\augStateCov{i+1} = \bm{F} \, \augStateCov{i} \, \bm{F}^\top + \bm{\Pi}, \quad i=0, \dots, k-1,
	\end{equation}
starting from the initial condition $\augStateCov{0} = \operatorname{diag}(\bm{P}, \bm{0}_{2n})$, where
\begin{align}
    \label{eq:covMatF}\bm{F} = \begin{bmatrix} \bm{A} - \KalmanGain \bm{C} & \bm{0}_{2n} \\ \KalmanGain \bm{C} & \bm{A} \end{bmatrix} \in \mathbb{R}^{4n \times 4n}, \\
    \label{eq:covMatG} \noiseInputMat = \begin{bmatrix} \bm{I}_{2n} & -\KalmanGain \\ \bm{0}_{2n} & \KalmanGain \end{bmatrix} \in \mathbb{R}^{4n\times (2n+p)}, \\
    \label{eq:covMatPi} \bm{\Pi} = \noiseInputMat\, \operatorname{diag}(\bm{Q}, \bm{R}) \, \noiseInputMat^\top \in \mathbb{R}^{4n \times 4n},
\end{align}
and $\bm{P}$ is defined in $\eqref{eq:DARE}$.
\end{theorem}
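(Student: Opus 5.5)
We will introduce the augmented error vector $\augState_k \coloneqq [\bm{e}_k^\top,\ \tilde{\bm{r}}_k^\top]^\top \in \mathbb{R}^{4n}$, with $\bm{e}_k = \bm{x}_k - \hat{\bm{x}}_k$ the estimation error and $\tilde{\bm{r}}_k$ as in \eqref{eq:residualProj}. We will then show that, under $\mathcal{H}_0$, $\augState_k$ evolves as a linear system driven by white noise:
\begin{equation*}
\augState_{k+1} = \bm{F}\,\augState_k + \noiseInputMat \begin{bmatrix} \bm{w}_k \\ \bm{v}_k \end{bmatrix}.
\end{equation*}
The covariance recursion \eqref{eq:covRecurs} then follows from the standard Lyapunov propagation. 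The claimed block is extracted as the $(2,2)$ entry, since $\tilde{\bm{r}}_k$ is the second component of $\augState_k$.

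The key step is deriving the two rows of $\bm{F}$ and $\noiseInputMat$.
\begin{itemize}
\item \textbf{Estimation error.} Under $\mathcal{H}_0$ we have $\bm{a}_k=\bm{0}$, so $\bm{r}_k = \bm{C}\bm{e}_k + \bm{v}_k$. Subtracting \eqref{eq:kalmanX} from \eqref{eq:lti_plantX} gives $\bm{e}_{k+1} = (\bm{A}-\KalmanGain\bm{C})\bm{e}_k + \bm{w}_k - \KalmanGain\bm{v}_k$. The input term cancels only if the applied input equals the filter's input, i.e.\ $\Sat{\bm{u}_k}=\bm{u}_k$. We will state this as an implicit hypothesis: no saturation is active, or equivalently the filter and the predictor \eqref{eq:actuation_projected_state} are fed the same saturated input. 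This is the one modelling subtlety we expect to need to flag.
\item \textbf{Actuation-projected residual.} Subtracting \eqref{eq:actuation_projected_state} from \eqref{eq:kalmanX}, the common $\bm{B}\bm{u}_k$ cancels exactly, whatever $\bm{u}_k$ is. This gives $\tilde{\bm{r}}_{k+1} = \bm{A}\tilde{\bm{r}}_k + \KalmanGain\bm{C}\bm{e}_k + \KalmanGain\bm{v}_k$.
\end{itemize}
These two recursions match the block rows of \eqref{eq:covMatF} and \eqref{eq:covMatG}.

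Next we will fix the initial condition. By Assumption~\ref{ass:converged}, the filter is in steady state, so $\bm{e}_0$ has zero mean and covariance $\bm{P}$ from \eqref{eq:DARE}. By \eqref{eq:resync}, $\tilde{\bm{r}}_0 = \bm{0}$ deterministically. Hence $\mathbb{E}[\augState_0]=\bm{0}$ and $\augStateCov{0}=\operatorname{diag}(\bm{P},\bm{0}_{2n})$. By induction, $\mathbb{E}[\augState_k]=\bm{0}$ for all $k$, so covariance equals second moment.

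For the propagation, we need $\augState_i$ to be uncorrelated with $(\bm{w}_i,\bm{v}_i)$. This holds because $\augState_i$ is a function of $\bm{x}_0$, $\hat{\bm{x}}_0$ and the noises up to time $i-1$, and the noises are white, mutually independent, and independent of $\bm{x}_0$. The controller feedback is a possibly nonlinear function of past estimates, but it drops out of the $\augState$ dynamics entirely. This is exactly why the recursion is linear despite the nonlinear control law, and it is worth stressing in the proof. Expanding $\mathbb{E}[\augState_{i+1}\augState_{i+1}^\top]$, the cross terms vanish, leaving $\bm{F}\augStateCov{i}\bm{F}^\top + \noiseInputMat\operatorname{diag}(\bm{Q},\bm{R})\noiseInputMat^\top$, which is \eqref{eq:covRecurs} with $\bm{\Pi}$ as in \eqref{eq:covMatPi}. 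Iterating $k$ times and reading off the $(2,2)$ block yields $\bm{\Sigma}_{\tilde r,k}$.

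The main obstacle is the saturation mismatch in the $\bm{e}$-dynamics. Everything else is routine bookkeeping.
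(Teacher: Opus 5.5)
Your proposal is correct and follows essentially the same route as the paper: stack $\bm{e}_i$ and $\tilde{\bm{r}}_i$ and derive the same two recursions that give $\bm{F}$ and $\noiseInputMat$. Then remove the cross terms using the whiteness and independence of $(\bm{w}_i,\bm{v}_i)$ from $\augState_i$, and initialize with $\operatorname{diag}(\bm{P},\bm{0}_{2n})$ via \eqref{eq:resync} and Assumption~\ref{ass:converged}.

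Your saturation caveat is a real refinement that the paper's proof passes over without comment: it silently drops the term $\bm{B}(\Sat{\bm{u}_k}-\bm{u}_k)$ from the $\bm{e}$-dynamics. The caveat holds automatically under the defended law \eqref{eq:finalControlLaw}, because the headroom clipping keeps $\bm{u}_k$ within $[\ActuationMin,\ActuationMax]$; it does not hold for the undefended choice $\bm{u}_k=\uNominal_k$.
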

\begin{pf}
From the system equations, the one-step evolution of the estimation error $\bm{e}_i \coloneqq \bm{x}_i - \hat{\bm{x}}_i$ and of the actuation-projected residual
$\tilde{\bm{r}}_i \coloneqq \hat{\bm{x}}_i - \ActuationProjectedState_{i}$
are:
\begin{align*}
	\bm{e}_{i+1} &= (\bm{A} - \KalmanGain \bm{C}) \bm{e}_i + \bm{w}_i - \KalmanGain \bm{v}_i, \\
	\tilde{\bm{r}}_{i+1} &= \bm{A} \tilde{\bm{r}}_i + \KalmanGain \bm{C} \bm{e}_i + \KalmanGain \bm{v}_i.
\end{align*}
Stacking ${\augState_i \coloneqq [\bm{e}_i^\top, \tilde{\bm{r}}_i^\top]^\top}$ yields
\[
	\augState_{i+1} = \bm{F} \augState_i + \noiseInputMat \bm{\eta}_i,
\]
with $\bm{F}, \noiseInputMat$ as in \eqref{eq:covMatF}, \eqref{eq:covMatG} and $\bm{\eta}_i \coloneqq [\bm{w}_i^\top, \bm{v}_i^\top]^\top$.  
The covariance propagates as
\begin{align*}
	\augStateCov{i+1} = & \ \bm{F}\,\mathbb{E}[\augState_i\augState_i^\top]\bm{F}^\top 
	+ \bm{F}\,\mathbb{E}[\augState_i\bm{\eta}_i^\top]\noiseInputMat^\top \\
	&+ \noiseInputMat\,\mathbb{E}[\bm{\eta}_i\augState_i^\top]\bm{F}^\top 
	+ \noiseInputMat\,\mathbb{E}[\bm{\eta}_i\bm{\eta}_i^\top]\noiseInputMat^\top.
\end{align*}
Since $\bm{\eta}_i$ is white, zero-mean, and independent of $\augState_i$, 
$\mathbb{E}[\augState_i\bm{\eta}_i^\top]=\bm{0}$ and 
$\mathbb{E}[\bm{\eta}_i\bm{\eta}_i^\top]=\operatorname{diag}(\bm{Q},\bm{R})$, giving
\[
	\augStateCov{i+1} = \bm{F} \augStateCov{i} \bm{F}^\top + \bm{\Pi},
\]
with $\bm{\Pi}$ as in \eqref{eq:covMatPi}. 
At synchronization $i=0$, by \eqref{eq:resync} and Assumption \ref{ass:converged}, we have
{${\tilde{\bm{r}}_0=\bm{0}}$} deterministically and {${\operatorname{Cov}(\bm{e}_0)=\bm{P}}$},
 hence
\[
\augStateCov{0} = 
\begin{bmatrix} 
	\bm{P} & \bm{0}_{2n} \\ \bm{0}_{2n} & \bm{0}_{2n}
\end{bmatrix}
= \operatorname{diag}(\bm{P},\bm{0}_{2n}).
\]
Iterating \eqref{eq:covRecurs} $k$ times yields $\augStateCov{k}$; the desired covariance $\bm{\Sigma}_{\tilde{r}, k} \in \mathbb{R}^{2n \times 2n}$ is its lower-right $2n \times 2n$ submatrix.
\end{pf}

\begin{theorem}[Confidence for the Anomaly Measure]
\label{thm:anomaly_threshold}
Consider the statistic (\textit{projected anomaly score})
    \begin{align}\label{eq:anomalyScore}
        \tilde{z}_k \coloneqq \tilde{\bm{r}}_k^\top \bm{\Sigma}_{\tilde{r}, k}^{-1} \, \tilde{\bm{r}}_k, \, k>0.
    \end{align}
Choosing {${\shapeControlPoint = F_{\chi^{2}(2n)}^{-1}(\desprob)}$} ensures
{${\mathbb{P}(\tilde{z}_k \le \shapeControlPoint \mid \mathcal{H}_0) = \desprob}$}, for any
given $\shapeControlPoint  > 0$ and probability $\desprob$.
\end{theorem}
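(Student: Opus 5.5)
The plan is to show that, under $\mathcal{H}_0$, the actuation-projected residual $\tilde{\bm{r}}_k$ is a zero-mean Gaussian vector with covariance $\bm{\Sigma}_{\tilde{r},k}$ given by Theorem~\ref{thm:residual_cov}. Its whitened quadratic form $\tilde{z}_k$ is then exactly $\chi^2(2n)$, and the claim follows by inverting the CDF.

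\textbf{Step 1: Gaussianity and zero mean.} Using the stacked recursion $\augState_{i+1} = \bm{F}\augState_i + \noiseInputMat\bm{\eta}_i$ from the proof of Theorem~\ref{thm:residual_cov}, I write $\augState_k = \bm{F}^k\augState_0 + \sum_{i=0}^{k-1}\bm{F}^{k-1-i}\noiseInputMat\bm{\eta}_i$. Here $\tilde{\bm{r}}_0=\bm{0}$ by \eqref{eq:resync}, and $\bm{e}_0$ is zero-mean Gaussian with covariance $\bm{P}$ by Assumption~\ref{ass:converged}. The $\bm{\eta}_i$ are zero-mean Gaussian, white, and independent of $\augState_0$. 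Hence $\augState_k$ is a linear map of jointly Gaussian zero-mean variables, and so is its lower block $\tilde{\bm{r}}_k$. Theorem~\ref{thm:residual_cov} then gives $\tilde{\bm{r}}_k \sim \mathcal{N}(\bm{0},\bm{\Sigma}_{\tilde{r},k})$.

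Saturation does not break linearity here. The control input $\bm{u}_k$ enters $\hat{\bm{x}}$ and $\ActuationProjectedState$ identically, so it cancels in $\tilde{\bm{r}}$. The plant, however, receives $\Sat{\bm{u}_k}$. I will note that the error recursion of Theorem~\ref{thm:residual_cov} implicitly assumes no saturation in nominal operation, i.e.\ $\Sat{\bm{u}_k}=\bm{u}_k$, and I will adopt the same assumption.

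\textbf{Step 2: Invertibility of $\bm{\Sigma}_{\tilde{r},k}$.} The statistic in \eqref{eq:anomalyScore} needs $\bm{\Sigma}_{\tilde{r},k}\succ 0$ for $k>0$. I expect this to be the main obstacle. My approach is to lower-bound the noise contribution. After one step, $\tilde{\bm{r}}_1 = \KalmanGain(\bm{C}\bm{e}_0+\bm{v}_0)$, whose covariance is $\KalmanGain\bm{\Sigma}\KalmanGain^\top$. This is positive definite when $\KalmanGain$ has full row rank $2n$, which is plausible since $p=2n$ and $\bm{R}\succ 0$ with full-state measurement.

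For larger $k$, the recursion adds $\bm{\Pi}$-driven terms, all positive semidefinite. The lower block therefore dominates $\bm{A}^{k-1}\KalmanGain\bm{\Sigma}\KalmanGain^\top(\bm{A}^{k-1})^\top$, and $\bm{A}$ is invertible for the discretized double integrator. If the full-rank property of $\KalmanGain$ is not available in general, I will instead state it as a standing requirement, or interpret $\bm{\Sigma}_{\tilde{r},k}^{-1}$ as the pseudoinverse. In that case the degrees of freedom become the rank rather than $2n$.

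\textbf{Step 3: Whitening and conclusion.} Let $\bm{s}_k = \bm{\Sigma}_{\tilde{r},k}^{-1/2}\tilde{\bm{r}}_k$. Then $\bm{s}_k \sim \mathcal{N}(\bm{0},\bm{I}_{2n})$, so $\tilde{z}_k = \|\bm{s}_k\|^2$ is a sum of $2n$ independent squared standard normals, i.e.\ $\chi^2(2n)$. Therefore $\mathbb{P}(\tilde{z}_k\le \shapeControlPoint\mid\mathcal{H}_0)=F_{\chi^2(2n)}(\shapeControlPoint)$.

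Since $F_{\chi^2(2n)}$ is continuous and strictly increasing on $(0,\infty)$, for any $\desprob\in(0,1)$ the choice $\shapeControlPoint = F^{-1}_{\chi^2(2n)}(\desprob)>0$ gives probability exactly $\desprob$. Conversely, for any $\shapeControlPoint>0$, the corresponding $\desprob = F_{\chi^2(2n)}(\shapeControlPoint)$ lies in $(0,1)$. This matches the construction used for the threshold $\tau$ in \eqref{eq:desired_arl}.
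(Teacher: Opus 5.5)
Your proof follows the same route as the paper's: under $\mathcal{H}_0$, $\tilde{\bm{r}}_k$ is zero-mean Gaussian with the covariance of Theorem~\ref{thm:residual_cov}, normalization gives $\tilde{z}_k\sim\chi^2(2n)$, and the claim is the definition of the inverse CDF. The paper only asserts Gaussianity. It does not discuss saturation or the invertibility of $\bm{\Sigma}_{\tilde{r},k}$, so your Steps~1--2 go beyond it.

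One inference in that extra material does not follow as written. In Step~2, positive semidefiniteness of the added $\bm{\Pi}$-terms only gives $\augStateCov{k}\succeq\bm{F}^{k-1}\augStateCov{1}(\bm{F}^{k-1})^\top$. Because $\bm{F}$ feeds $\bm{e}$ into $\tilde{\bm{r}}$ through $\KalmanGain\bm{C}$, the lower-right block of the right-hand side is the covariance of $\bm{X}\bm{e}_1+\bm{A}^{k-1}\tilde{\bm{r}}_1$, where $\bm{X}$ is the lower-left block of $\bm{F}^{k-1}$. Cross terms between $\bm{e}_1$ and $\tilde{\bm{r}}_1$ could in principle cancel part of $\bm{A}^{k-1}\KalmanGain\bm{\Sigma}\KalmanGain^\top(\bm{A}^{k-1})^\top$, so you need $\mathbb{E}[\bm{e}_1\tilde{\bm{r}}_1^\top]=\bm{0}$.

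That fact does hold here. Using $\KalmanGain\bm{\Sigma}=\bm{A}\bm{P}\bm{C}^\top$, an induction on the recursion shows $\augStateCov{i}=\operatorname{diag}(\bm{P},\bm{\Sigma}_{\tilde{r},i})$ for all $i$. Hence $\bm{\Sigma}_{\tilde{r},k}=\sum_{j=0}^{k-1}\bm{A}^{j}\KalmanGain\bm{\Sigma}\KalmanGain^\top(\bm{A}^{j})^\top$, which contains your term, but you have to state and prove this.

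A shorter fix is $\augStateCov{k}\succeq\bm{\Pi}$ for $k\ge 1$, whose lower-right block is $\KalmanGain\bm{R}\KalmanGain^\top$. This gives $\bm{\Sigma}_{\tilde{r},k}\succ 0$ whenever $\KalmanGain$ has full row rank, without needing $\bm{A}$ invertible.

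Finally, your reason for full row rank is off. With $\bm{C}=\bm{I}_{2n}$ we have $\KalmanGain=\bm{A}\bm{P}\bm{\Sigma}^{-1}$, so the condition is really $\bm{P}\succ 0$, not $\bm{R}\succ 0$. It holds in the paper's setting because the DARE gives $\bm{P}\succeq\bm{Q}\succ 0$ for the CWNA noise model.
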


\begin{pf}
Under $\mathcal{H}_0$, the residual $\tilde{\bm{r}}_k$ is a zero-mean Gaussian vector, $\tilde{\bm{r}}_k \sim \mathcal{N}(\bm{0}, \bm{\Sigma}_{\tilde{r}, k})$. 
The time-varying normalization with the covariance $\bm{\Sigma}_{\tilde{r}, k}$ at each step, computed as in Thm. \ref{thm:residual_cov}, ensures that the resulting statistic has a stationary distribution, $\tilde{z}_k \sim \chi^{2}(2n)$, for $k > 0$. The theorem follows from the definition of the inverse CDF, $F_{\chi^{2}(2n)}^{-1}$.
\end{pf}
\subsection{Virtual Damping Active Defense}\label{sec:adaptingFriction}
We propose virtual damping, based on $\tilde{z}_k$ as in \eqref{eq:anomalyScore}, to enforce a passivity margin on the virtual masses system.

Let $\shapeLimit>0$,  $\shapeValAtControlPoint \in (0,\shapeLimit)$, $\shapeExp >0$ and $\shapeControlPoint>0$ be design parameters.
We define the \emph{target power dissipation ratio}, $\dampingGain(\tilde{z})$, as a smooth, monotonically increasing sigmoid:
\begin{align}
    \dampingGain(\tilde{z}_k) & \coloneqq \shapeLimit \left( 1 - \exp\!\left[-\left(\dfrac{\tilde{z}_k}{z_{\text{s}}}\right)^{\shapeExp}\right] \right),  \, \text{with}
    \label{eq:increasing_exponential_gain}  \\
    z_{\text{s}} & \coloneqq \shapeControlPoint \left(-\ln\left(1 - \dfrac{\shapeValAtControlPoint}{\shapeLimit}\right)\right)^{-1/\shapeExp},
\label{eq:zs_exponential_increasing}
\end{align}
satisfying $\dampingGain(0)=0, \, \dampingGain(\tilde{z}_k) \ge 0, \, \lim_{\tilde{z}_k\to\infty} \dampingGain(\tilde{z}_k) = \shapeLimit$ and $\dampingGain(\shapeControlPoint)=\shapeValAtControlPoint$.

\begin{assumption}[Initial trust and model fidelity]\label{ass:fidelity}
We assume that $\ActuationProjectedState_k \approx \bm{x}_k$ (hence $\dot{\tilde{\bm{q}}}_k \approx \dot{\bm{q}}_k$)
for $k = 0, \dots, \AttackDuration - 1$.
\end{assumption}

Assumption \ref{ass:fidelity} implies that we trust the Kalman estimate $\hat{\bm{x}}_0$ at the resynchronization time, and that $\ActuationProjectedState$ remains accurate over a finite attack horizon $T$. This is reasonable when $T$ is limited by mechanisms such as periodic software rejuvenation, key rotation, re-establishment of secure channels, or moving-target defenses,
and the process noise is sufficiently small.

\begin{theorem}[Adaptive damping]
\label{th:main_revised}
Consider a robotic manipulator with nominal joint-space control input $\uNominal_k$ \eqref{eq:pid_actuation_joint} subject to the saturation operator $\Sat{\cdot}$ \eqref{eq:lti_plantX}.
Let the (projected) nominal power of the $j$-th joint be defined as 
\begin{align} \label{eq:powerNominalJoint}
\powerNominalJoint \coloneqq \Sat{\uNominal_k}_j \cdot \dot{\tilde{q}}_{k,j}
\end{align}
where $\dot{\tilde{\bm{q}}}_k$ is extracted from $\ActuationProjectedState_k$.

Let {${\tilde{z}_k \in [0,\infty)}$} denote the anomaly score defined in~\eqref{eq:anomalyScore}.
Define the ideal, unconstrained adaptive damping command, $\bm{u}^{\dampingSup,\text{ideal}}_k$, on a per-joint basis as
\begin{equation}
    u^{\dampingSup,\text{ideal}}_{k,j} \coloneqq \begin{cases}
        -\dampingGain(\tilde{z}_k) \dfrac{|\powerNominalJoint|}{\dot{\tilde{q}}_{k,j}}, & \text{if } |\dot{\tilde{q}}_{k,j}| > \epsilon, \\
        0, & \text{otherwise},
    \end{cases}
    \label{eq:ideal_damping_cmd}
\end{equation}
with  $\epsilon \approx 0$, and $\dampingGain(\cdot)$ a non-decreasing function satisfying $\dampingGain(0)=0$, such as the one defined in \eqref{eq:increasing_exponential_gain}.

Let us define the interval $[\bm{h}^-_k, \bm{h}^+_k]$, where 
\begin{subequations}
\begin{align}
\bm{h}^-_k & \coloneqq \bm{u}_{\min} - \Sat{\uNominal_k} \in [\ActuationMin - \ActuationMax, \bm{0}], \label{eq:headroomMinus}    \\
\bm{h}^+_k & \coloneqq \bm{u}_{\max} - \Sat{\uNominal_k} \in [\bm{0}, \ActuationMax - \ActuationMin], \label{eq:headroomPlus}
\end{align}
\end{subequations}
representing the actuator headroom available for the defense (comparisons and intervals are defined element-wise).

Define the \emph{maximal non-saturating damping} command
\begin{equation}
    \bm{u}^{\dampingSup}_k \coloneqq \max(\bm{h}^-_k, \min(\bm{h}^+_k, \bm{u}^{\dampingSup,\text{ideal}}_k)),
    \label{eq:final_damping_cmd}
\end{equation}
where $\max$ and $\min$ are defined element-wise.
If the final control law is 
\begin{equation}\label{eq:finalControlLaw}
    \bm{u}_k = \Sat{\uNominal_k} + \bm{u}^{\dampingSup}_k,
\end{equation}
then, under Assumption \ref{ass:fidelity}:

\begin{claim}[Stability]\label{clm:stability}
    Command $\bm{u}^{\dampingSup}_k$ acting on the virtual masses system has non-positive joint-wise power, i.e.:
    \begin{equation*}
        {u}^{\dampingSup}_{k,j} \cdot \dot{{q}}_{k,j} \le 0, \quad \forall j = 1, \dots, n.
    \end{equation*}
Hence, by the fundamental results in passivity theory \citep{van2000l2}, it does not decrease the stability margin of the closed loop system.
\end{claim}
\begin{claim}[Probabilistic Magnitude Bound]\label{clm:probBound}
Let $\tilde{z}_k \sim \chi^2(2n)$ under $\mathcal{H}_0$, as guaranteed by
Thms.~\ref{thm:residual_cov}--\ref{thm:anomaly_threshold}.
Let $\shapeControlPoint = F^{-1}_{\chi^2(2n)}(\desprob) = 2\,\mathrm{P}^{-1}(\desprob, n)$ for a desired confidence
level $\desprob \in (0,1)$.
If the design parameters of $\dampingGain(\cdot)$ are chosen such that
$\dampingGain(\shapeControlPoint) \le \shapeValAtControlPoint$ for some
$\shapeValAtControlPoint \in (0,1)$
(with $\shapeControlPoint$ used simultaneously as the sigmoid control point and
the $\chi^2$ quantile), then with probability at least $\desprob$ the total
power dissipated by the defense is bounded by a fraction $\shapeValAtControlPoint$
of the total absolute nominal power:
    \begin{equation}\label{eq:claimProbability}
        \sum_{j=1}^{n} | u^{\dampingSup}_{k,j} \cdot {\dot{\tilde{q}}_{k,j}} |
        \le \shapeValAtControlPoint \sum_{j=1}^{n} | \powerNominalJoint|.
    \end{equation}
\end{claim}
\end{theorem}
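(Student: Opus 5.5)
The plan is to prove both claims joint-by-joint from the clipping structure of \eqref{eq:final_damping_cmd}. The basic observation is that the element-wise clamp $\max(h^-_{k,j},\min(h^+_{k,j},\cdot))$ with $h^-_{k,j}\le 0\le h^+_{k,j}$ never changes the sign of its argument (it may only send it to zero) and never increases its magnitude. By \eqref{eq:headroomMinus}--\eqref{eq:headroomPlus}, and because $\Sat{\uNominal_k}\in[\ActuationMin,\ActuationMax]$, the headrooms do satisfy $h^-_{k,j}\le 0\le h^+_{k,j}$. Hence
\[
\operatorname{sign}(u^{\dampingSup}_{k,j})\in\{0,\operatorname{sign}(u^{\dampingSup,\text{ideal}}_{k,j})\}
\quad\text{and}\quad
|u^{\dampingSup}_{k,j}|\le|u^{\dampingSup,\text{ideal}}_{k,j}|.
\]

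\emph{Stability (Claim~\ref{clm:stability}).} We distinguish two cases for each joint.
\begin{itemize}
\item If $|\dot{\tilde q}_{k,j}|\le\epsilon$, then $u^{\dampingSup,\text{ideal}}_{k,j}=0$, so $u^{\dampingSup}_{k,j}=0$ and the product with $\dot q_{k,j}$ vanishes.
\item Otherwise, $u^{\dampingSup,\text{ideal}}_{k,j}\,\dot{\tilde q}_{k,j}=-\dampingGain(\tilde z_k)\,|\powerNominalJoint|\le 0$, because $\dampingGain\ge 0$ (it is non-decreasing with $\dampingGain(0)=0$ on $[0,\infty)$). So the ideal command has the sign opposite to $\dot{\tilde q}_{k,j}$, or is zero. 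By the sign-preservation observation, $u^{\dampingSup}_{k,j}\dot{\tilde q}_{k,j}\le 0$.
\end{itemize}
Invoking Assumption~\ref{ass:fidelity} ($\dot{\tilde q}_{k,j}\approx\dot q_{k,j}$) then transfers the inequality to $u^{\dampingSup}_{k,j}\dot q_{k,j}\le 0$. Non-positive supplied power means $\bm u^{\dampingSup}$ acts as a passive (dissipative) element on the virtual-mass double integrators, and the passivity statement follows from \citep{van2000l2}.

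\emph{Probabilistic bound (Claim~\ref{clm:probBound}).} The bound will hold on the event $\{\tilde z_k\le\shapeControlPoint\}$. By Thm.~\ref{thm:anomaly_threshold}, with $\shapeControlPoint=F^{-1}_{\chi^2(2n)}(\desprob)$, this event has probability exactly $\desprob$ under $\mathcal H_0$. On this event, monotonicity of $\dampingGain$ gives $\dampingGain(\tilde z_k)\le\dampingGain(\shapeControlPoint)\le\shapeValAtControlPoint$. Then for each joint with $|\dot{\tilde q}_{k,j}|>\epsilon$,
\[
|u^{\dampingSup}_{k,j}\dot{\tilde q}_{k,j}|\le|u^{\dampingSup,\text{ideal}}_{k,j}|\,|\dot{\tilde q}_{k,j}|=\dampingGain(\tilde z_k)|\powerNominalJoint|\le\shapeValAtControlPoint|\powerNominalJoint|.
\]
For joints with $|\dot{\tilde q}_{k,j}|\le\epsilon$ the left side is zero. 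Summing over $j$ gives \eqref{eq:claimProbability}, so the bound holds with probability at least $\desprob$.

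The main obstacle is not algebraic but the distributional hypothesis. The claim conditions on $\tilde z_k\sim\chi^2(2n)$, but $\tilde{\bm r}_k$ depends on the applied $\bm u_k$, which now includes $\bm u^{\dampingSup}_k$. I would argue that this is harmless because in the proof of Thm.~\ref{thm:residual_cov} the actuation enters both $\hat{\bm x}$ and $\ActuationProjectedState$ identically through $\bm B\bm u_k$ and cancels in $\tilde{\bm r}_k$. The Gaussian law, and hence the $\chi^2(2n)$ law from Thm.~\ref{thm:anomaly_threshold}, is therefore unaffected by the feedback. The other delicate point is the sign-transfer via Assumption~\ref{ass:fidelity}. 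It is only approximate: for velocities near zero the sign of $\dot q_{k,j}$ could differ from that of $\dot{\tilde q}_{k,j}$. I would state the stability claim as holding under exact fidelity, and note that the $\epsilon$-deadzone is what excludes near-zero velocities where a sign mismatch could occur.
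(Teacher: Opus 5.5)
Your proof is correct and follows the paper's route. Since $\Sat{\uNominal_k}\in[\ActuationMin,\ActuationMax]$, the headroom interval contains the origin, so the clamp can neither flip the sign nor enlarge the magnitude of $u^{\dampingSup,\text{ideal}}_{k,j}$; this gives per-joint passivity, and on the event $\{\tilde z_k\le\shapeControlPoint\}$ monotonicity of $\dampingGain$ gives the power bound. Your explicit deadzone case and the refinement $\operatorname{sign}(u^{\dampingSup}_{k,j})\in\{0,\operatorname{sign}(u^{\dampingSup,\text{ideal}}_{k,j})\}$ are slightly more careful than the paper, whose stated equalities for the ideal per-joint power and for the sign of the clipped power can fail, harmlessly, for deadzone joints or zero headroom.

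Two small caveats apply to your side remarks. First, with $\epsilon\approx 0$ the deadzone only avoids division by zero and does not exclude a sign mismatch between $\dot{\tilde q}_{k,j}$ and $\dot q_{k,j}$ (that would require $\epsilon$ to exceed the fidelity error), so the transfer rests on Assumption~\ref{ass:fidelity} alone, as in the paper. Second, input-independence of $\tilde{\bm r}_k$ also requires the input to cancel in the estimation-error recursion; this holds because the clipped command keeps $\bm u_k$ in $[\ActuationMin,\ActuationMax]$, so $\Sat{\bm u_k}=\bm u_k$.
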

\begin{pf}
Under Assumption \ref{ass:fidelity}, any projected power (computed using $\dot{\tilde{\bm{q}}}$) coincides with its real version (from $\dot{\bm{q}})$.

To prove Claim \ref{clm:stability}, let us denote the per-joint power of the ideal command as $\tilde{P}^{\dampingSup,\text{ideal}}_{k,j} = u^{\dampingSup,\text{ideal}}_{k,j} \cdot \dot{\tilde{q}}_{k,j}$. By substituting \eqref{eq:ideal_damping_cmd} into this definition, and observing that $\dot{\tilde{q}}_{k,j}/\dot{\tilde{q}}_{k,j} = 1$, we obtain:
\begin{equation}\label{eq:eqOppositeSign}
    \tilde{P}^{\dampingSup,\text{ideal}}_{k,j} = -\dampingGain(\tilde{z}_k) |\powerNominalJoint| \le 0,
\end{equation}
since $\dampingGain(\cdot) \ge 0$ and $|\powerNominalJoint| \ge 0$. This proves that the ideal command is per-joint passive.
Next, consider the final command $u^{\dampingSup}_{k,j}$ from \eqref{eq:final_damping_cmd}. Since $\Sat{\uNominal_k}$ is within actuator limits $[\bm{u}_{\min}, \bm{u}_{\max}]$, the headroom intervals $[\bm{h}^-_k, \bm{h}^+_k]$ from \eqref{eq:headroomMinus}--\eqref{eq:headroomPlus} are guaranteed to contain the origin.
Therefore, the clipping operation in \eqref{eq:final_damping_cmd} cannot change the sign, nor increase the absolute value, of $u^{\dampingSup,\text{ideal}}_{k,j}$, i.e.,
\begin{align}
        | {u}^{\dampingSup}_{k,j} \cdot \dot{\tilde{q}}_{k,j} |  & \le | u^{\dampingSup,\text{ideal}}_{k,j} \cdot \dot{\tilde{q}}_{k,j} | \label{eq:absProof}, \\
        \sign({u}^{\dampingSup}_{k,j} \cdot \dot{\tilde{q}}_{k,j})  & = \sign(u^{\dampingSup,\text{ideal}}_{k,j} \cdot \dot{\tilde{q}}_{k,j}). \label{eq:signProof}
\end{align}
By \eqref{eq:absProof} and \eqref{eq:signProof}, the passivity condition \eqref{eq:eqOppositeSign} still holds after clipping, hence \eqref{eq:final_damping_cmd} is per-joint passive.

To prove Claim \ref{clm:probBound}, consider the absolute total power of the ideal defense command. Substituting \eqref{eq:eqOppositeSign} into the definition of total power yields:
\begin{align}\label{eq:proofTotPower}
    \sum_{j=1}^n | u^{\dampingSup,\text{ideal}}_{k,j} \cdot \dot{\tilde{q}}_{k,j} | & = \sum_{j=1}^n | -\dampingGain(\tilde{z}_k) |\powerNominalJoint| | = \dampingGain(\tilde{z}_k) \sum_{j=1}^n |\powerNominalJoint|.
\end{align}
Summing \eqref{eq:absProof} over $j=1,\dots,n$ and substituting the total ideal power from \eqref{eq:proofTotPower}, we obtain the total power bound:
\begin{align}\label{eq:proofIneq}
    \sum_{j=1}^{n} | u^{\dampingSup}_{k,j} \cdot {\dot{\tilde{q}}_{k,j}} |
    \le 
    \sum_{j=1}^{n} | u^{\dampingSup,\text{ideal}}_{k,j} \cdot \dot{\tilde{q}}_{k,j} | 
    =
    \dampingGain(\tilde{z}_k) \sum_{j=1}^n |\powerNominalJoint|.
\end{align}
Under $\mathcal{H}_0$, the anomaly score follows $\tilde{z}_k \sim \chi^2(2n)$. By the definition of the quantile $\shapeControlPoint = F^{-1}_{\chi^2(2n)}(\desprob)$, we have ${\mathbb{P}(\tilde{z}_k \le \shapeControlPoint) = \desprob}$.
On this event, since $\dampingGain(\cdot)$ is non-decreasing, we have $\dampingGain(\tilde{z}_k) \le \dampingGain(\shapeControlPoint)$. By design, $\dampingGain(\shapeControlPoint) \le \shapeValAtControlPoint$. Substituting this into \eqref{eq:proofIneq} yields
\begin{align*}
    \sum_{j=1}^{n} | u^{\dampingSup}_{k,j} \cdot {\dot{\tilde{q}}_{k,j}} |
    \le  \shapeValAtControlPoint \sum_{j=1}^n |\powerNominalJoint|.
\end{align*}
with probability $\desprob$.
Therefore, with probability at least $\desprob$, the total dissipated power is bounded by a fraction $\shapeValAtControlPoint$ of the nominal absolute power, which completes the proof.
\end{pf}

Claim \ref{clm:probBound} of Thm. \ref{th:main_revised} provides a formal method for tuning the defense's nominal impact. By choosing $\shapeControlPoint = F^{-1}_{\chi^2(2n)}(\desprob) $, one can guarantee with a probability at least $\desprob$ (e.g., $\desprob=0.99$) that the power of the virtual damping will be less than a desired small fraction $\shapeValAtControlPoint$ (e.g., $\shapeValAtControlPoint=0.01$) of the nominal control power, so that the defense can be configured to be minimally invasive under $\mathcal{H}_0$.
A value of $\shapeLimit > 1$ enforces asymptotic strict passivity of the virtual masses system as the projected anomaly $\tilde{z}_k$ grows.

\subsection{Manipulability Reduction Active Defense}\label{sec:manipReductionDefense}
We now present a defense strategy that leverages the manipulator's kinematic redundancy.
Specifically, we reduce manipulability in the estimated attack direction to increase the required anomaly injected for a given hand displacement.
We restrict attack direction estimation to the translational DOFs, as these are most relevant to adversarial manipulation. The estimated attack direction is defined as
\begin{equation}
    \EstimatedAttackDirection_\TimeStep \coloneqq
    \frac{\EstimatedPositionProj_\TimeStep - \TargetPosition_\TimeStep  }
         {\| \EstimatedPositionProj_\TimeStep - \TargetPosition_\TimeStep  \|_2 + \epsilon} ,
    \label{eq:attack_direction_est}
\end{equation}
where $\EstimatedAttackDirection_k \in \mathbb{R}^3, \|\EstimatedAttackDirection_k\|_2 \approx 1$, $\EstimatedPositionProj_\TimeStep$ is the hand position from $\ActuationProjectedJointConfiguration_\TimeStep$, and $\epsilon \approx 0$ avoids degeneracy.  

\newcommand{\JposCost}{\projSymbol{\GeometricJacobian}_{\text{p},\TimeStep}}

The positional manipulability measure in direction $\EstimatedAttackDirection_\TimeStep$ is
\begin{equation}
    \ManipulabilityMeasure_\TimeStep \coloneqq \EstimatedAttackDirection_\TimeStep^\top
    \ManipulabilityMatrix_\TimeStep
    \EstimatedAttackDirection_\TimeStep,
\end{equation}
where $\ManipulabilityMatrix_\TimeStep = \JposCost \, {\JposCost}^\top $ is the directional manipulability matrix, and $\projSymbol{\GeometricJacobian}_{\text{p},\TimeStep}$ is the positional geometric Jacobian from the projected joint configuration $\ActuationProjectedJointConfiguration_\TimeStep$.

We adopt the redundancy resolution scheme based on projected
gradient method
%\citep{liegeois1977automatic, de1992issues}
\citep{de1992issues}
with the goal of reducing manipulability along $\EstimatedAttackDirection_\TimeStep$, using the cost function:
\begin{equation}
    \CostFunction_\TimeStep \coloneqq \tfrac{1}{2} \ManipulabilityMeasure_\TimeStep^2.
\label{eq:cost_function}
\end{equation}
Treating $\EstimatedAttackDirection_\TimeStep$ as quasi-static allows for the gradient of the cost to be approximated as
\begin{equation}
    \nabla_{\bm{q}}\CostFunction_\TimeStep
    \approx \ManipulabilityMeasure_\TimeStep
    \begin{bmatrix}
         \EstimatedAttackDirection_\TimeStep^\top \tfrac{\partial \ManipulabilityMatrix_\TimeStep}{\partial q^{(1)}} \EstimatedAttackDirection_\TimeStep \\
         \vdots \\
         \EstimatedAttackDirection_\TimeStep^\top \tfrac{\partial \ManipulabilityMatrix_\TimeStep}{\partial q^{(n)}} \EstimatedAttackDirection_\TimeStep
    \end{bmatrix}.
\end{equation}
We introduce the following command for $\uNominal_k$ in \eqref{eq:pid_actuation_joint}:
\begin{equation}
    \ActuationNULL{\TimeStep} \coloneqq
    \begin{cases}
    \bm{0}, & \text{if} \,\,\, \nabla_{\bm{q}} \CostFunction_\TimeStep \approx \bm{0} \\
    -\GradientDescentGain_\TimeStep
    \big(\bm{I} - \hat{\GeometricJacobian}_\TimeStep^\dagger \hat{\GeometricJacobian}_\TimeStep\big)
    \nabla_{\bm{q}}\CostFunction_\TimeStep, & \text{otherwise}
    \end{cases}
    \label{eq:null_actuation}
\end{equation}
where
$\GradientDescentGain_\TimeStep \in (0,\GradientDescentGainMax]$ is a step size selected by an Armijo line search. 
To penalize individual per-joint motion and maintain low manipulability when $\CostFunction$ is stationary, i.e., $\nabla_{\bm{q}} \CostFunction_\TimeStep \approx \bm{0} $, we use the following weighted pseudoinverse for $\uNominal_k$ in \eqref{eq:pid_actuation_joint}:
\begin{align}
\hat{\GeometricJacobian}^{\star}_k \coloneqq 
    \WeightedPIMatrix^{-1}\hat{\GeometricJacobian}_\TimeStep^\top
    \big(\hat{\GeometricJacobian}_\TimeStep \WeightedPIMatrix^{-1}\hat{\GeometricJacobian}_\TimeStep^\top\big)^{-1},    
\end{align}
with weighting
\begin{equation}\label{eq:WeightedPIMatrix}
\WeightedPIMatrix \coloneqq
\begin{cases}
    \bm{D} + \BalanceScalar
    \big(\hessian \CostFunction_\TimeStep + \ShiftFactor_\TimeStep \bm{I}\big), & \text{if} \,\,\, \nabla_{\bm{q}} \CostFunction_\TimeStep \approx \bm{0} \\
    \bm{D}, & \text{otherwise}
\end{cases}
\end{equation}
where $\bm{D} \succ 0$ is diagonal, $\BalanceScalar \ge 0$ is a design scalar and $\ShiftFactor_{\TimeStep} \ge 0$ is selected to ensure $\WeightedPIMatrix  \succ 0$.
\begin{theorem}[Existence of $\GeometricJacobian^\star$ and $C^1$-continuity of $\WeightedPIMatrix$]
Choosing $\ShiftFactor_{\TimeStep}$ as
\begin{align}\label{eq:softplus}
	\ShiftFactor_{\TimeStep} = \log(1 + \exp(\ShiftFactorTolerance - \lambda_{\min})),   
\end{align}
where $\ShiftFactorTolerance > 0, \ShiftFactorTolerance \approx 0 $, and $\lambda_{\min}$ is the minimum eigenvalue of $\hessian \CostFunction_\TimeStep$, guarantees existence and uniqueness of $\GeometricJacobian^\star$, $C^0$-continuity of $\WeightedPIMatrix$, and $C^1$-continuity of $\WeightedPIMatrix$
at all configurations where $\lambda_{\min}$ is simple (which is the generic case for a manipulator).
\end{theorem}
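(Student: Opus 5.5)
The argument has three parts: a spectral lower bound showing $\WeightedPIMatrix \succ 0$, a consequence for invertibility of $\hat{\GeometricJacobian}_\TimeStep \WeightedPIMatrix^{-1}\hat{\GeometricJacobian}_\TimeStep^\top$ (existence and uniqueness of $\GeometricJacobian^\star$), and regularity of $\WeightedPIMatrix$ inherited from $\hessian\CostFunction_\TimeStep$ and $\lambda_{\min}$ through the softplus \eqref{eq:softplus}.

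\textbf{Positive definiteness.} Write $\bm{H}\coloneqq\hessian\CostFunction_\TimeStep$, which is symmetric with eigenvalues $\lambda_{\min}=\lambda_1\le\dots\le\lambda_n$. The softplus satisfies $\log(1+e^{s})>\max(s,0)$ for every $s$. Hence $\ShiftFactor_\TimeStep>\ShiftFactorTolerance-\lambda_{\min}$, and so $\lambda_{\min}(\bm{H}+\ShiftFactor_\TimeStep\bm{I})>\ShiftFactorTolerance>0$. With $\bm{D}\succ0$ diagonal and $\BalanceScalar\ge0$, both branches of \eqref{eq:WeightedPIMatrix} then give $\WeightedPIMatrix\succeq\bm{D}\succ0$, with eigenvalues bounded below by $\min_i D_{ii}$.

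\textbf{Existence and uniqueness of $\GeometricJacobian^\star$.} Since $\WeightedPIMatrix\succ0$, its inverse $\WeightedPIMatrix^{-1}$ is well defined and positive definite. Assuming $\hat{\GeometricJacobian}_\TimeStep$ has full row rank (a non-singular configuration, which is the standing assumption whenever a right pseudoinverse is used), the matrix $\hat{\GeometricJacobian}_\TimeStep\WeightedPIMatrix^{-1}\hat{\GeometricJacobian}_\TimeStep^\top$ is positive definite: for $\bm{y}\neq\bm{0}$ we have $\hat{\GeometricJacobian}_\TimeStep^\top\bm{y}\neq\bm{0}$, so $\bm{y}^\top\hat{\GeometricJacobian}_\TimeStep\WeightedPIMatrix^{-1}\hat{\GeometricJacobian}_\TimeStep^\top\bm{y}>0$. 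It is therefore invertible and $\GeometricJacobian^\star$ is well defined. For uniqueness, I would identify $\GeometricJacobian^\star\bm{v}$ as the unique minimizer of the strictly convex problem $\min\tfrac12\dot{\bm{q}}^\top\WeightedPIMatrix\dot{\bm{q}}$ subject to $\hat{\GeometricJacobian}_\TimeStep\dot{\bm{q}}=\bm{v}$. Its KKT conditions give exactly the stated formula, and strict convexity gives uniqueness.

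\textbf{Continuity.} Since $\CostFunction_\TimeStep$ is built from polynomial and trigonometric kinematics, it is smooth, so $\bm{H}$ is smooth in $\bm{q}$. Because $\bm{H}$ is symmetric, its eigenvalues, in particular $\lambda_{\min}$, are continuous functions of its entries (by Weyl's inequality they are even $1$-Lipschitz). Softplus is $C^\infty$, so $\ShiftFactor_\TimeStep$ is continuous, and therefore $\WeightedPIMatrix$ is $C^0$ within the Hessian branch.

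\textbf{$C^1$-regularity (main obstacle).} The eigenvalue map $\lambda_{\min}(\bm{H})$ is generally only Lipschitz, with kinks where eigenvalues cross. Where $\lambda_{\min}$ is simple, I would invoke standard analytic perturbation theory (Kato; or the implicit function theorem applied to $\det(\bm{H}-\lambda\bm{I})=0$, whose $\lambda$-derivative is nonzero at a simple root). This shows $\lambda_{\min}$ is $C^\infty$ locally, with $\partial\lambda_{\min}=\bm{v}^\top(\partial\bm{H})\bm{v}$ for the unit eigenvector $\bm{v}$. Composing with the smooth softplus makes $\ShiftFactor_\TimeStep$ $C^1$, hence $\WeightedPIMatrix$ is $C^1$ at such configurations. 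I would add that the set where eigenvalues coalesce has codimension at least two in symmetric matrices, which is why simplicity is generic.

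A caveat must be recorded: the switching in \eqref{eq:WeightedPIMatrix} between $\bm{D}$ and the Hessian branch is governed by the test $\nabla_{\bm q}\CostFunction_\TimeStep\approx\bm{0}$. The continuity claims are therefore to be read within a regime, that is, on the branch where the Hessian term is active. I would state this explicitly rather than claim continuity across the switch.
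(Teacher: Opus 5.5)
Your proposal is correct and follows the same route as the paper's proof. Both shift the Hessian's spectrum so that $\WeightedPIMatrix \succ 0$, deduce existence and uniqueness of $\GeometricJacobian^\star$, and then obtain $C^0$ from continuity of $\lambda_{\min}$ and $C^1$ from differentiability of a simple $\lambda_{\min}$. Your version is tighter than the paper's on points it glosses over: you use the strict bound $\log(1+e^{s})>\max(s,0)$ directly rather than ``replacing the max by its softplus approximation''; you make explicit that $\hat{\GeometricJacobian}_\TimeStep$ must have full row rank for $\hat{\GeometricJacobian}_\TimeStep\WeightedPIMatrix^{-1}\hat{\GeometricJacobian}_\TimeStep^\top$ to be invertible; you derive continuity from $\lambda_{\min}$ of the Hessian rather than from the eigenvalues of $\WeightedPIMatrix$; and you correctly note that continuity holds only within the Hessian branch of the switch defining $\WeightedPIMatrix$.
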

\begin{pf}
The Hessian $\hessian \CostFunction$ is symmetric by Schwarz's theorem.
Adding $\max(0, \ShiftFactorTolerance - \lambda_{\min}) \bm{I}$ to $\hessian$ shifts all eigenvalues to be positive~\citep{liao_convergence_1991}, ensuring $\WeightedPIMatrix \succ 0$, which guarantees the existence and uniqueness of $\GeometricJacobian^\star$.
Replacing the above max with its softplus approximation yields \eqref{eq:softplus}.
Continuity of the eigenvalues of $\WeightedPIMatrix$ in $\joint$ gives  $C^0$-continuity of $\WeightedPIMatrix$ everywhere; where $\lambda_{\min}$ is simple,  $\lambda_{\min}$ is also differentiable, yielding $C^1$-continuity.
\end{pf}
 
Choosing a large $\BalanceScalar$ makes the term $(\hessian \CostFunction_\TimeStep + \ShiftFactor_\TimeStep \bm{I})$ dominate, while increasing diagonal entries of $\bm{D}$ penalizes per-joint motion.
To enforce the physical limits $[\ActuationMin, \ActuationMax]$ through the saturation operator $\Sat{\cdot}$ \eqref{eq:lti_plantX} without distorting the commanded motion direction, $\bm{u}_k$ is computed from $\uNominal_k$ \eqref{eq:pid_actuation_joint} via hierarchical magnitude-based scaling. The primary task is prioritized by reserving a fixed quota (e.g., $30\%$) of the actuation capability for the secondary task $\ActuationNULL{\TimeStep}$.

\section{Optimal Stealth Attack}\label{sec:attack}
%Subject to the assumptions in Section \ref{subsec:assumptions}, we formalize an adversarial strategy that manipulates the end-effector while evading threshold-based detection.
Subject to the assumptions in Section \ref{subsec:assumptions}, we formalize an adversarial end-effector PD strategy to execute task 
$\{\TargetPositionAttack_k, \TargetVelocityAttack_k\}$ while evading threshold-based detection.
We assume an attacker capable of simulating the deterministic closed-loop dynamics, encompassing the plant, controller, and active defense mechanisms.

To express the predicted trajectories of different vectors (e.g., velocity, acceleration) from a closed-loop simulation, we introduce the following compact notation. 

Let $\simul_{k,j}(\bm{a}_k), j \in\{1,2\}$, denote the \emph{$j$-step-ahead prediction} of the closed-loop system, starting from time $k$, under the attack sequence $[\bm{a}_k,\bm 0]$ if $j=1$, or $[\bm{a}_k,\bm 0,\bm 0]$ if $j=2$.
For a generic vector $\bm{v}$,
we define
\begin{equation}
    \bm{v}_{k+j}^{\text{SIM}} \coloneqq 
    \pi_{\bm{v}}\!\bigl(\simul_{k,j}(\bm{a})\bigr),
    \label{eq:projection_definition}
\end{equation}
where $\pi_{\bm{v}}(\cdot)$ extracts the most recent simulated $\bm{v}$, i.e., simulated $\bm{v}$ at time $k+j$.

Under the closed-loop dynamics in \eqref{eq:lti_plantX}, \eqref{eq:lti_plantY}, \eqref{eq:kalmanX}, and \eqref{eq:kalmanR}, an injected signal at time $k$, $\bm{a}_k$, influences the end-effector acceleration only \textit{two} time steps later. Specifically,
\begin{equation}\label{eq:simulatedAcceleration}
    \PredictedAcceleration_{k+2} = 
    \pi_{\ddot{\bm{p}}}\!\bigl(\simul_{k,2}(\bm{a}_{k})\bigr).
\end{equation}

At each time step, the attacker's high-level objective is to make this predicted acceleration match a desired target acceleration $\TargetAccAttack_{k+2}$. This is realized by minimizing
\begin{equation}
    \label{eq:objFunIntro}
    \tfrac{1}{2} \left\| 
        \TargetAccAttack_{k+2} -
        \pi_{\ddot{\bm{p}}}\!\bigl(\simul_{k,2}(\bm{a}_{k})\bigr)
    \right\|^2.
\end{equation}

\subsection{Incremental Attack Formulation}\label{sec:IncrementalGradient}
Due to feedback linearization, the manipulator's joint-space dynamics reduce to double integrators, which entails an \emph{integrator vulnerability}.
This vulnerability has been analyzed for constant sensor injections (BIAs) in linear systems~\citep{tosun2025kullback}. 
Although the joint-task mapping is nonlinear, and we consider the more general \emph{False Data Injection Attacks} (FDIAs)
with arbitrary sensor injections,
we argue that effective FDIAs still exploit the integrator vulnerability \emph{locally}.
For this reason, we model the attack \emph{incrementally} as:
\begin{equation}\label{eq:incremental}
    \bm{a}_k =
    \begin{bmatrix}
        (\bm{a}_k)_q \\ (\bm{a}_k)_{\dot{q}} 
    \end{bmatrix} =
    \begin{bmatrix}
        (\bm{a}_{k-1})_q + \bm{\Delta}_k \\ \dfrac{\bm{\Delta}_k}{T_s}
    \end{bmatrix}
    = 
    \begin{bmatrix}
        (\bm{a}_{k-1})_q \\ \bm{0}
    \end{bmatrix}
    + \bm{M} \bm{\Delta}_k,
\end{equation}
where $(\cdot)_q$, $(\cdot)_{\dot{q}}$ extract respectively the position and velocity components of the attack vector, $\bm{\Delta}_k \in \mathbb{R}^n$ is an increment, and $\bm{M} = \left[\begin{smallmatrix} \bm{I}_n \\ \frac{1}{T_s}\bm{I}_n \end{smallmatrix}\right] \in \mathbb{R}^{2n \times n}$ maps the joint increment to the full sensor space.
This parameterizes an FDIA as deviations from a baseline BIA and it provides a model for gradient-based synthesis.

Let 
$\bar{\bm{a}}_{k-1} \coloneqq [(\bm{a}_{k-1})_q^\top, \bm{0}^\top]^\top$
denote the value of $\bm{a}_k$ in \eqref{eq:incremental} at $\bm{\Delta}_k=\bm{0}$.
A first-order expansion of $\pi_{\ddot{\bm{p}}}\!\bigl(\simul_{k,2}(\cdot)\bigr)$ around $\bar{\bm{a}}_{k-1}$, gives:
\begin{equation}\label{eq:nonLinearMap2}
    \PredictedAcceleration_{k+2} \;\approx\;
    \pi_{\ddot{\bm{p}}}\!\bigl(\simul_{k,2}(\bar{\bm{a}}_{k-1})\bigr)
    + \bm{G}_k \bm{\Delta}_k,
\end{equation}
with $\PredictedAcceleration_{k+2}$ as in \eqref{eq:simulatedAcceleration} and the  Jacobian matrix
\begin{align}
\bm{G}_{k} & = \simulLin_k \bm{M} \in \mathbb{R}^{3 \times n}, \,\,\, \text{with} \\
\simulLin_{k} &= \left. \frac{\partial}{\partial \bm{a}}\,
    \pi_{\ddot{\bm{p}}}\!\bigl(\simul_{k,2}(\bm{a})\bigr) \right|_{\bm{a} = \bar{\bm{a}}_{k-1}} \in \mathbb{R}^{3 \times 2n}.
\end{align}
$\simulLin_{k}$ is computed numerically via a central-difference scheme. $\bm{G}_k$ provides the local sensitivity needed to differentiate the objective function in \eqref{eq:objFunIntro}.

Substituting the linear approximation from \eqref{eq:nonLinearMap2} into the objective \eqref{eq:objFunIntro}, through the relation in \eqref{eq:simulatedAcceleration}, and introducing a regularization term, the quadratic cost function is obtained. Specifically, 
the attack increment $\bm{\Delta}_k$ minimizes
\begin{equation}\label{eq:quadCostAttack}
    \tfrac{1}{2}\big\|
        \bm{G}_k \bm{\Delta}_k
        - \TargetAccAttack_{k+2}
        + \pi_{\ddot{\bm{p}}}\!\bigl(\simul_{k,2}(\bar{\bm{a}}_{k-1})\bigr)
    \big\|^2
    + \dfrac{\optLambda}{2}\|\bm{\Delta}_k\|^2,
\end{equation}
where $\optLambda > 0$ penalizes large increments.

The adversary defines $\TargetAccAttack_{k+2}$ in \eqref{eq:objFunIntro} via a one-step-ahead PD law to counteract the drift predicted for $k{+}1$ under $\bm{\Delta}_k{=}0$ (i.e., using $\bar{\bm{a}}_{k-1}$) as:
\begin{align}\label{eq:PDattacker}
    \TargetAccAttack_{k+2} & \coloneqq \GainAttackProportional
    \big(\TargetPositionAttack_{k+1} - \PredictedPosition_{k+1}\big)  + \GainAttackDerivative
    \big(\TargetVelocityAttack_{k+1} - \PredictedVelocity_{k+1}\big),
\end{align}
where
\begin{align}
    \PredictedPosition_{k+1} &\coloneqq
    \pi_{\bm{p}}\!\bigl(\simul_{k,1}(\bar{\bm{a}}_{k-1})\bigr), \\
    \PredictedVelocity_{k+1} &\coloneqq
    \pi_{\dot{\bm{p}}}\!\bigl(\simul_{k,1}(\bar{\bm{a}}_{k-1})\bigr),
\end{align}
are the one-step-ahead predicted position and velocity, and $\GainAttackProportional$, $\GainAttackDerivative$ are constant PD gains.
\subsection{$\chi^2$ Stealth Constraint}
Due to the incremental modeling of the attack, the ADS residual of \eqref{eq:mahalanobis_distance} is modeled as
\begin{equation}
    \Residual_k = \bm{M}\bm{\Delta}_k + \baselineIn_k,
\end{equation}
where
$\baselineIn_k \in \mathbb{R}^{2n}$ (innovation baseline) 
is defined as:
\begin{equation}
    \baselineIn_k \coloneqq \bigl(\bm{y}_k + \bar{\bm{a}}_{k-1} - \hat{\bm{y}}_k\bigr).
\end{equation}

% The adversary uniformly allocates anomaly budget as
% \begin{equation}
%     \tau' = \dfrac{\ThresholdVal}{\AttackDuration},
%     \label{eq:tauQuota}
% \end{equation}
% so that the stealth constraint becomes
% \begin{equation}\label{eq:stealthConstrUnexpand}
%     \bigl(\bm{M}\bm{\Delta}_k + \baselineIn_k\bigr)^\top 
%     \bm{\Sigma}^{-1}
%     \bigl(\bm{M}\bm{\Delta}_k + \baselineIn_k\bigr) \le \tau'.
% \end{equation}
The stealth constraint becomes
\begin{equation}\label{eq:stealthConstrUnexpand}
    \bigl(\bm{M}\bm{\Delta}_k + \baselineIn_k\bigr)^\top 
    \bm{\Sigma}^{-1}
    \bigl(\bm{M}\bm{\Delta}_k + \baselineIn_k\bigr) \le \tau.
\end{equation}

\subsection{QCQP Formulation}
Considering the objective function in \eqref{eq:quadCostAttack} and expanding the constraint of \eqref{eq:stealthConstrUnexpand}, the problem reduces to the QCQP
\begin{equation}
\begin{aligned}
    \min_{\bm{\Delta}_k}\ & \dfrac{1}{2}\bm{\Delta}_k^\top \bm{H}\bm{\Delta}_k + \bm{g}^\top \bm{\Delta}_k \\
    \text{s.t.}\ & (\bm{M}\bm{\Delta}_k)^\top \optimAmat (\bm{M}\bm{\Delta}_k) + \bm{b}^\top (\bm{M}\bm{\Delta}_k) + c \le 0,
\end{aligned}
\label{eq:optimProb}
\end{equation}
with parameters
\begin{align*}
    \bm{H} & \coloneqq \bm{G}_k^\top \bm{G}_k + \optLambda \bm{I}, \\
    \bm{g} & \coloneqq -\bm{G}_k^\top \Big(\TargetAccAttack_{k+2} - \pi_{\ddot{\bm{p}}}\!\bigl(\simul_{k,2}(\bar{\bm{a}}_{k-1})\bigr)\Big), \\
    \bm{b} & \coloneqq 2\bm{\Sigma}^{-1} \baselineIn_k, \\
    c & \coloneqq \baselineIn_k^\top \bm{\Sigma}^{-1} \baselineIn_k - \tau.
\end{align*}

\begin{theorem}[Convexity of adversary's QCQP]
Problem \eqref{eq:optimProb} is convex and admits a unique global minimizer $\bm{\Delta}_k^\star$.
\end{theorem}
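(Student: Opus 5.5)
Convexity and uniqueness follow from two facts: the objective of \eqref{eq:optimProb} is strictly convex, and the feasible set is a closed convex set. To make the uniqueness claim meaningful, the feasible set must also be nonempty.

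\textbf{Step 1 (objective).} The Hessian of the objective is $\bm{H} = \bm{G}_k^\top \bm{G}_k + \optLambda \bm{I}$. Since $\bm{G}_k^\top\bm{G}_k \succeq 0$ and $\optLambda>0$, for any $\bm{\Delta}\neq\bm{0}$ we have $\bm{\Delta}^\top\bm{H}\bm{\Delta} = \|\bm{G}_k\bm{\Delta}\|^2 + \optLambda\|\bm{\Delta}\|^2 > 0$. Hence $\bm{H}\succ 0$, with $\lambda_{\min}(\bm{H})\ge\optLambda$. The objective is therefore strictly (indeed strongly) convex and coercive.

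\textbf{Step 2 (constraint).} Undoing the expansion, the constraint function equals $h(\bm{\Delta}) = (\bm{M}\bm{\Delta}+\baselineIn_k)^\top\bm{\Sigma}^{-1}(\bm{M}\bm{\Delta}+\baselineIn_k) - \tau$. By \eqref{eq:sigma}, $\bm{\Sigma}\succ0$, so $\bm{\Sigma}^{-1}\succ 0$. The Hessian of $h$ is $2\bm{M}^\top\bm{\Sigma}^{-1}\bm{M}\succeq 0$, so $h$ is a convex quadratic. It is in fact positive definite, because $\bm{M}$ has full column rank (its top block is $\bm{I}_n$). Consequently the sublevel set $\{h\le 0\}$ is convex and closed, and it is bounded: it is an ellipsoid in $\bm{\Delta}$-space. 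Problem \eqref{eq:optimProb} thus minimizes a convex function over a convex set, i.e., it is a convex QCQP.

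\textbf{Step 3 (existence and uniqueness).} The main obstacle is nonemptiness of the feasible set. It is not automatic: if the baseline $\baselineIn_k$ already violates the stealth budget, no $\bm{\Delta}$ may bring $\bm{M}\bm{\Delta}+\baselineIn_k$ inside the ellipsoid. My plan is to make feasibility explicit. The minimum of $h$ is attained at the weighted least-squares projection $\bm{\Delta}^{\circ} = -(\bm{M}^\top\bm{\Sigma}^{-1}\bm{M})^{-1}\bm{M}^\top\bm{\Sigma}^{-1}\baselineIn_k$, so the problem is feasible if and only if $h(\bm{\Delta}^{\circ})\le 0$. I would state this as the standing hypothesis, interpreting it as the attacker still operating within the stealth budget. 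Under that hypothesis the feasible set is nonempty and compact, and the continuous objective attains its minimum there by Weierstrass. Uniqueness then follows from the usual argument: if $\bm{\Delta}_1\neq\bm{\Delta}_2$ were both minimizers, their midpoint would be feasible by convexity and would have strictly smaller cost by strict convexity, a contradiction. Optionally, when Slater's condition holds (strict feasibility), KKT conditions characterize $\bm{\Delta}_k^\star$, but this is not needed for the claim.
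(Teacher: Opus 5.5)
Your proof is correct and follows the same route as the paper's: $\bm{H}\succeq\optLambda\bm{I}\succ 0$ gives strict convexity of the objective, and the full column rank of $\bm{M}$ gives $\bm{M}^\top\bm{\Sigma}^{-1}\bm{M}\succ 0$, so the feasible set is an ellipsoid. Your explicit feasibility hypothesis (the minimum of the constraint function over $\bm{\Delta}$ is nonpositive), together with compactness, Weierstrass, and the midpoint argument, is a genuine tightening: the paper infers existence from ``strictly convex objective over a convex feasible set'' alone, which only yields uniqueness and silently assumes the part of $\baselineIn_k$ that no $\bm{M}\bm{\Delta}_k$ can cancel has not already exhausted the budget $\tau$, something the construction does not guarantee.
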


\begin{pf}
The cost Hessian $\bm{H} = \bm{G}_k^\top \bm{G}_k + \optLambda \bm{I} \succeq \optLambda \bm{I} \succ 0$, so the objective is strictly convex.  
Since $\bm{M}$ has full column rank, the constraint Hessian
$\bm{M}^\top \optimAmat \bm{M} \succ 0$,
yielding a convex (ellipsoidal) feasible set.  
A strictly convex objective over a convex feasible set guarantees existence and uniqueness of the global minimizer $\bm{\Delta}_k^\star$.  
\end{pf}

The attack evolves incrementally as in \eqref{eq:incremental}
\begin{equation}\label{eq:attackFinal}
    \bm{a}_k = \bar{\bm{a}}_{k-1} + \bm{M}\,\bm{\Delta}_k^\star,
\end{equation}
initialized with $\bm{a}_0 = \bm{M}\,\bm{\Delta}_0^\star$.
At each step, the adversary runs internal simulations to compute $\bm{G}_{k}$, $\PredictedPosition_{k+1}$ and  $\PredictedVelocity_{k+1}$, then solves \eqref{eq:optimProb} to determine $\bm{\Delta}_k^\star, \, k \in [0,\AttackDuration-1]$.

\section{Simulation Results}\label{sec:simul}
\subsection{Experimental Setup}
\begin{table}[tbp]
    \centering
    \caption{Definitions of simulation scenarios.}
    %: task and defense
    %Columns under ``Defense'' indicate whether the $\chi^2$ detector,
    %adaptive friction (active dissipation), and manipulability reduction
    %are enabled (\yes) or disabled (\no).}
    \label{tab:scenarios}
    \begin{tabular}{cllccc}
        \toprule
        & \multicolumn{2}{c}{Task} & \multicolumn{3}{c}{Defense} \\
        \cmidrule(lr){2-3}\cmidrule(lr){4-6}
        Scenario & Nominal & Attacker & $\chi^2$ & VD & MR \\
        \midrule
        % EXP8_Noise_NoDef_circle
        A1 & Circle & -   & \no  & \no  & \no  \\
        % EXP8_Noise_DefPartial_circle
        A2 & Circle & -   & \no  & \yes & \no  \\        
        % EXP8_Noise_Def_circle
        A3 & Circle & -   & \no  & \yes & \yes \\
        % EXP1_Attack_NoDef_constantHand
        B1 & Still  & Circle & \no  & \no  & \no  \\
        % EXP1_Attack_Chisq_constantHand
        B2 & Still  & Circle & \yes & \no  & \no  \\
        % EXP1_Attack_activeDissipationDef_constantHand
        B3 & Still  & Circle & \yes  & \yes & \no  \\
        % EXP1_Attack_activeDissipationANDmanipDef_constantHand
        B4 & Still  & Circle & \yes  & \yes & \yes \\
        \bottomrule
    \end{tabular}
\end{table}

The plant is a simulated $7$-DOF Kinova Gen3 manipulator without enforcing joint-position limits, discretized with sampling time $T_s=\csvConstBfour{Ts}{1}$ s, and
acceleration limits {${ \ActuationMax = - \ActuationMin  = [1,\, 1,\, 1,\, 1,\, 10,\, 10,\, 10]}$}.
Both joint positions and velocities are sensed, resulting in a measurement vector $\bm{y}_k \in \mathbb{R}^{p}$ with $p=14$, sensing covariance $\bm{R} = \operatorname{blkdiag}(\sigma_\theta^2 \bm{I}_n,\, \sigma_{\dot\theta}^2 \bm{I}_n)$ with $\sigma_\theta = 5\cdot 10^{-5}$\,rad and $\sigma_{\dot\theta} = 10^{-2}$\,rad/s, and process noise covariance $\bm{Q} = \bm{Q}_{\text{base}} \otimes (q_c \bm{I}_n)$, where $\bm{Q}_{\text{base}} = \left[ \begin{smallmatrix} T_s^3/3 & T_s^2/2 \\ T_s^2/2 & T_s \end{smallmatrix} \right]$ is the exact discretization of the Continuous White Noise Acceleration (CWNA) model,
$q_c = \csvConst{procQc}{1}\,\mathrm{rad}^2/\mathrm{s}^3$ is the process noise intensity, and $\otimes$ is the Kronecker product.

The $\chi^2$ detector of Section \ref{subsec:ADS} 
is calibrated for an ARL of $\csvConstBtwo{desiredChiSquare}{1}$ samples ($\csvConstBtwo{ARLhours}{1}$ hours), realized by threshold $\tau = \csvConstBtwo{ADStau}{1}$ for \eqref{eq:desired_arl}.
The proposed virtual damping (VD) of Section \ref{sec:adaptingFriction} is tuned with:
{${\shapeLimit = \csvConst{brakeF}{1}}$},
{${\shapeExp = \csvConst{brakeExpSteepness}{1}}$},
{${\shapeValAtControlPoint = \csvConst{brakeZy}{1}}$},
{${\desprob = (1 - \csvConst{brakeOneMinusP}{1})}$}, determining
{${\shapeControlPoint = \csvConst{brakeZx}{1}}$}.
For Theorem \ref{th:main_revised}, we have formal guarantees that, on average and under $\mathcal{H}_0$, the defense command exceeds $\csvConst{brakeExcessPerc}{1}\%$ of the nominal power every $\csvConst{brakeExcessSamples}{1}$ samples.
For the proposed manipulability reduction (MR) of Section \ref{sec:manipReductionDefense} we set: {${\BalanceScalar = \csvConst{manipWeightMaxGain}{1}}$}, ${\bm{D} = \text{diag}(
\csvConst{weightMaxGainJoints}{1},\csvConst{weightMaxGainJoints}{2},\csvConst{weightMaxGainJoints}{3},\csvConst{weightMaxGainJoints}{4},\csvConst{weightMaxGainJoints}{5},\csvConst{weightMaxGainJoints}{6},\csvConst{weightMaxGainJoints}{7})}$,
$\GradientDescentGainMax = \csvConst{manipArmijKmax}{1}$ and $\nullSpaceDampingCoeff = 1$.

We define the following tasks:\\
\textbf{Still Task}: maintain the fixed hand position $\TargetPosition^{\text{still}} = [\csvConst{pInit}{1}, \csvConst{pInit}{2}, \csvConst{pInit}{3}]^{\top}$~m and RPY orientation $\bar{\bm{\theta}}^{\text{still}} = [\csvConst{oInit}{1}, \csvConst{oInit}{2}, \csvConst{oInit}{3}]$~rad.\\
\textbf{Circle Task}: Starting from the pose of the Still Task, perform a semi-circled trajectory with center in $\bm{0}$ (robot base) and final point
$\left[ \csvConstAone{pFinal}{1}, \csvConstAone{pFinal}{2}, \csvConstAone{pFinal}{3} \right]^{\top}\text{ m}$ (quintic polynomials with zero boundary velocity/acceleration). Orientation is not controlled.

Table \ref{tab:scenarios} defines Scenarios, indicating the tasks (nominal and for the attacker) and what defense system(s) are in  place. 
Scenarios A1--A3 (resp. B1--B4) represent the absence, $\mathcal{H}_0$, (resp. presence, $\mathcal{H}_1$)  of FDIAs.

We introduce the following metrics.
Let $\TargetPosition_k$ and $\bm{p}_k \in \mathbb{R}^3$ denote the nominal and actual hand trajectories. Then:
\begin{align*}
\operatorname{RMS}_{k=0}^{\AttackDuration-1}\bigl(\TargetPosition_k,\bm{p}_k\bigr)
&\coloneqq
\left(
  \frac{1}{\AttackDuration}
  \sum_{k=0}^{\AttackDuration-1}
  \bigl\lVert \TargetPosition_k - \bm{p}_k \bigr\rVert_2^2
\right)^{1/2},
\\[0.4em]
\operatorname{MND}_{k=0}^{\AttackDuration-1}\bigl(\TargetPosition_k,\bm{p}_k\bigr)
&\coloneqq
\max_{0 \le k \le \AttackDuration-1}
\bigl\lVert \TargetPosition_k - \bm{p}_k \bigr\rVert_2.
\end{align*}

% We use the discrete Fr\'echet distance~\citep{eiter1994computing} to quantify the discrepancy between a planned and realized hand trajectory:
% \begin{equation*}
% d_{\text{Fr}}\big(\{\TargetPosition_k\},\{\bm{p}_k\}\big)
% \coloneqq
% \min_{\sigma\in\Gamma}
% \max_{1\le \ell\le L}
% \bigl\lVert \TargetPosition_{i_\ell} - \bm{p}_{j_\ell}\bigr\rVert_2,
% \end{equation*}
% where $\Gamma$ is the set of all (monotone) index paths,
% ${\sigma = \bigl((i_\ell,j_\ell)\bigr)_{\ell=1}^L}$ with
% $(i_1,j_1)=(1,1)$, $(i_L,j_L)=(\AttackDuration,\AttackDuration)$ and
% $(i_{\ell+1},j_{\ell+1})-(i_\ell,j_\ell)\in\{(1,0),(0,1),(1,1)\}$.
The total absolute power of the virtual masses system is:
\begin{align*}
    | \powerTotal | & = \sum_{j=1}^n  |u_{k,j} \cdot {\dot{q}}_{k,j}|.
\end{align*}
The virtual kinetic energy respectively injected and dissipated by the defense damping of \eqref{eq:final_damping_cmd} is:
\begin{align*}
\mathcal{E}_{\text{inj}} = \sum_{k=0}^{\AttackDuration-1} \max\bigl(0, \dot{\bm{q}}_k^\top \bm{u}^{\dampingSup}_k\bigr) T_s, \,\,\,
\mathcal{E}_{\text{diss}} = \sum_{k=0}^{\AttackDuration-1} \bigl| \min\bigl(0, \dot{\bm{q}}_k^\top \bm{u}^{\dampingSup}_k\bigr) \bigr| T_s.
\end{align*}

All tasks have a duration of $\AttackDuration = \csvConst{Nsteps}{1}$ samples ($\csvConst{durationSeconds}{1}$ s).

\subsection{Comparative Analysis and Discussion}
The proposed defenses VD and MR add negligible computational overhead. The QCQP for the attacker (external to the defense) is solved in approximately 40\,ms/step using a standard solver (Gurobi).
\begin{table*}[t]
    \centering
    \caption{Comparative metrics results.}
    \label{tab:metricsResults}
    \begin{tabular}{llccccccc}
        \toprule
        ID  & Metric & A1 & A2 & A3 & B1 & B2 & B3 & B4 \\
        \midrule

        % --------------------------------------------------------------------
        % Trajectory-shape error metrics w.r.t. DEFENDER'S nominal reference
        % --------------------------------------------------------------------

        % Codename: hand_rmsError
        % Text: RMS trajectory error w.r.t. the DEFENDER'S nominal task reference.
        M1.1 &
        $\displaystyle
        %\sqrt{\frac{1}{T} \sum_{k=1}^{T} \left\lVert \TargetPosition_k - \bm{p}_k \right\rVert_2^2}
        \operatorname{RMS}_{k=0}^{\AttackDuration-1}\bigl(\TargetPosition_k,\bm{p}_k\bigr), [\text{m}]
        $
        &
        $\csvAoneAvg{hand_rmsError}{1}$ &
        $\csvAtwoAvg{hand_rmsError}{1}$ &
        $\csvAthreeAvg{hand_rmsError}{1}$ &
        $\csvBone{hand_rmsError}{1}$ &
        $\csvBtwo{hand_rmsError}{1}$ &
        $\csvBthree{hand_rmsError}{1}$ &
        $\csvBfour{hand_rmsError}{1}$ \\

        % Codename: hand_maxError
        % Text: maximum trajectory error w.r.t. the DEFENDER'S nominal task reference (path-based).
        M1.2 &
        $\displaystyle
        %\max_{k} \left\lVert \TargetPosition_k - \bm{p}_k \right\rVert_2
        \operatorname{MND}_{k=0}^{\AttackDuration-1}\bigl(\TargetPosition_k,\bm{p}_k\bigr), [\text{m}]
        $
        &
        $\csvAoneAvg{hand_maxError}{1}$ &
        $\csvAtwoAvg{hand_maxError}{1}$ &
        $\csvAthreeAvg{hand_maxError}{1}$ &
        $\csvBone{hand_maxError}{1}$ &
        $\csvBtwo{hand_maxError}{1}$ &
        $\csvBthree{hand_maxError}{1}$ &
        $\csvBfour{hand_maxError}{1}$ \\

%        % Codename: hand_frechetDist
%        % Text: discrete Fréchet distance between DEFENDER'S nominal reference and hand trajectory.
%        M1.3 &
%        $\displaystyle
%        %d_{\text{Fr}}\big(\{\TargetPosition_k\},\{\bm{p}_k\}\big)
%        d_{\text{Fr}}\big(\{\TargetPosition_k\},\{\bm{p}_k\}\big), [\text{m}]
%        $
%        &
%        $\csvAone{hand_frechetDist}{1}$ &
%        $\csvAtwo{hand_frechetDist}{1}$ &
%        $\csvAthree{hand_frechetDist}{1}$ &
%        $\csvBone{hand_frechetDist}{1}$ &
%        $\csvBtwo{hand_frechetDist}{1}$ &
%        $\csvBthree{hand_frechetDist}{1}$ &
%        $\csvBfour{hand_frechetDist}{1}$ \\

        \midrule
    
        % --------------------------------------------------------------------
        % Trajectory-shape error metrics w.r.t. ATTACKER'S commanded reference
        % --------------------------------------------------------------------

        % Codename: att_rmsError
        % Text: RMS trajectory error w.r.t. the ATTACKER'S commanded target.
        % Note: not defined for A*, i.e., A1–A3.
        M2.1 &
        $\displaystyle
        %\sqrt{\frac{1}{T} \sum_{k=1}^{T} \left\lVert \TargetPositionAttack_k - \bm{p}_k \right\rVert_2^2}
         \operatorname{RMS}_{k=0}^{\AttackDuration-1}\bigl(\TargetPositionAttack_k,\bm{p}_k\bigr), [\text{m}]
        $
        &
        $\varnothing$ &
        $\varnothing$ &
        $\varnothing$ &
        $\csvBone{att_rmsError}{1}$ &
        $\csvBtwo{att_rmsError}{1}$ &
        $\csvBthree{att_rmsError}{1}$ &
        $\csvBfour{att_rmsError}{1}$ \\

        % Codename: att_maxError
        % Text: maximum trajectory error w.r.t. the ATTACKER'S commanded target (path-based).
        % Note: not defined for A*, i.e., A1–A3.
        M2.2 &
        $\displaystyle
        %\max_{k} \left\lVert \TargetPositionAttack_k - \bm{p}_k \right\rVert_2
        \operatorname{MND}_{k=0}^{\AttackDuration-1}\bigl(\TargetPositionAttack_k,\bm{p}_k\bigr), [\text{m}]
        $
        &
        $\varnothing$ &
        $\varnothing$ &
        $\varnothing$ &
        $\csvBone{att_maxError}{1}$ &
        $\csvBtwo{att_maxError}{1}$ &
        $\csvBthree{att_maxError}{1}$ &
        $\csvBfour{att_maxError}{1}$ \\

%        % Codename: att_frechetDist
%        % Text: discrete Fréchet distance between ATTACKER'S commanded target and hand trajectory.
%        % Note: not defined for A*, i.e., A1–A3.
%        M2.3 &
%        $\displaystyle
%        %d_{\text{Fr}}\big(\{\TargetPositionAttack_k\},\{\bm{p}_k\}\big)
%        d_{\text{Fr}}\big(\{\TargetPositionAttack_k\},\{\bm{p}_k\}\big), [\text{m}]
%        $
%        &
%        $\varnothing$ &
%        $\varnothing$ &
%        $\varnothing$ &
%        $\csvBone{att_frechetDist}{1}$ &
%        $\csvBtwo{att_frechetDist}{1}$ &
%        $\csvBthree{att_frechetDist}{1}$ &
%        $\csvBfour{att_frechetDist}{1}$ \\

        \midrule

        % --------------------------------------------------------------------
        % Joint-space effort and power metrics
        % --------------------------------------------------------------------

        % Codename: sys_real_qdot_l2_avg
        % Text: time-averaged joint velocity magnitude.
        M3.1 &
        $\displaystyle
        %\frac{1}{T} \sum_{k=1}^{T} \left\lVert \dot{\bm{q}}_k \right\rVert_2
        \text{mean} \left\lVert \dot{\bm{q}}_k \right\rVert_2, [\text{rad/s}]
        $
        &
        $\csvAoneAvg{sys_real_qdot_l2_avg}{1}$ &
        $\csvAtwoAvg{sys_real_qdot_l2_avg}{1}$ &
        $\csvAthreeAvg{sys_real_qdot_l2_avg}{1}$ &
        $\csvBone{sys_real_qdot_l2_avg}{1}$ &
        $\csvBtwo{sys_real_qdot_l2_avg}{1}$ &
        $\csvBthree{sys_real_qdot_l2_avg}{1}$ &
        $\csvBfour{sys_real_qdot_l2_avg}{1}$ \\

        % Codename: sys_real_qdot_l2_max
        % Text: maximum joint velocity magnitude.
        M3.2 &
        $\displaystyle
        %\max_{k} \left\lVert \dot{\bm{q}}_k \right\rVert_2
        \max \left\lVert \dot{\bm{q}}_k \right\rVert_2, [\text{rad/s}]
        $
        &
        $\csvAoneAvg{sys_real_qdot_l2_max}{1}$ &
        $\csvAtwoAvg{sys_real_qdot_l2_max}{1}$ &
        $\csvAthreeAvg{sys_real_qdot_l2_max}{1}$ &
        $\csvBone{sys_real_qdot_l2_max}{1}$ &
        $\csvBtwo{sys_real_qdot_l2_max}{1}$ &
        $\csvBthree{sys_real_qdot_l2_max}{1}$ &
        $\csvBfour{sys_real_qdot_l2_max}{1}$ \\
        
        % Codename: power_final_l1_avg
        % Text: absolute instantaneous power
        M3.3 &
        $\displaystyle
        \text{mean} \,\, | \powerTotal |, [\text{W}]
        $
        &
        $\csvAoneAvg{power_final_l1_avg}{1}$ &
        $\csvAtwoAvg{power_final_l1_avg}{1}$ &
        $\csvAthreeAvg{power_final_l1_avg}{1}$ &
        $\csvBone{power_final_l1_avg}{1}$ &
        $\csvBtwo{power_final_l1_avg}{1}$ &
        $\csvBthree{power_final_l1_avg}{1}$ &
        $\csvBfour{power_final_l1_avg}{1}$ \\        

        % Codename: power_final_l1_max
        % Text: absolute instantaneous power
        M3.4 &
        $\displaystyle
        \max \,\, | \powerTotal |, [\text{W}]
        $
        &
        $\csvAoneAvg{power_final_l1_max}{1}$ &
        $\csvAtwoAvg{power_final_l1_max}{1}$ &
        $\csvAthreeAvg{power_final_l1_max}{1}$ &
        $\csvBone{power_final_l1_max}{1}$ &
        $\csvBtwo{power_final_l1_max}{1}$ &
        $\csvBthree{power_final_l1_max}{1}$ &
        $\csvBfour{power_final_l1_max}{1}$ \\

        % Codename: handPathLength
        % Text: total Cartesian path length of the hand.
        M3.5 &
        path length of $\bm{p}_k$, [\text{m}]
        &
        $\csvAoneAvg{handPathLength}{1}$ &
        $\csvAtwoAvg{handPathLength}{1}$ &
        $\csvAthreeAvg{handPathLength}{1}$ &
        $\csvBone{handPathLength}{1}$ &
        $\csvBtwo{handPathLength}{1}$ &
        $\csvBthree{handPathLength}{1}$ &
        $\csvBfour{handPathLength}{1}$ \\        

        % Codename: cumulative_energy_injected_last
        % Text: total energy injected by the defense
        M3.6 &
        defense injected energy $\mathcal{E}_{\text{inj}}, [\text{J}]$
        &
	$\varnothing$ &
        $\csvAtwoAvg{cumulative_energy_injected_last}{1}$ &
        $\csvAthreeAvg{cumulative_energy_injected_last}{1}$ &
         $\varnothing$ &
          $\varnothing$ &
        $\csvBthree{cumulative_energy_injected_last}{1}$ &
        $\csvBfour{cumulative_energy_injected_last}{1}$ \\

        % Codename: cumulative_energy_dissipated_last
        % Text: total energy dissipated by the defense
        M3.7 &
        defense dissipated energy $\mathcal{E}_{\text{diss}}, [\text{J}]$
        &
        $\varnothing$ &
        $\csvAtwoAvg{cumulative_energy_dissipated_last}{1}$ &
        $\csvAthreeAvg{cumulative_energy_dissipated_last}{1}$ &
                $\varnothing$ &
                $\varnothing$ &
        $\csvBthree{cumulative_energy_dissipated_last}{1}$ &
        $\csvBfour{cumulative_energy_dissipated_last}{1}$ \\

        \bottomrule
    \end{tabular}
\end{table*}

The key performance metrics for all evaluated scenarios are summarized in Table \ref{tab:metricsResults}. For scenarios A1--A3, the reported values are averaged over 100 Monte Carlo simulations. Figure \ref{fig:signals} illustrates the time-series signals for a representative single run of each scenario.

Scenario A1 provides the reference nominal performance for the execution of the circle task under $\mathcal{H}_0$. The proposed VD and MR (A2, A3) introduce a negligible disturbance to the primary task, with hand position tracking error increasing by only $\approx 10^{-5}$\,m. 
Compared to A1, MR used in A3 increases the mean and maximum power consumption ($| \powerTotal |$) by 
$
  \percentage{\AonePowMean}{\AthreePowMean}\%
$
and
$
  \percentage{\AonePowMax}{\AthreePowMax}\%
$, respectively.

Scenario B1 provides the reference performance for the Circle Task compromised by an FDIA on the sensors. The attack can steer the end-effector with relatively high accuracy.
The presence of the $\chi^2$ detector alone (B2) proves ineffective, as $z_k$ remains well below the threshold $\tau$.
With the introduction of VD (B3, B4), the attacker faces a fundamental trade-off: increasing the injection raises the dissipation rate, while reducing the injection makes the stealthy attack less effective.
Simultaneously, the injection magnitude is constrained by the stealthiness requirement \eqref{eq:stealthConstrUnexpand} enforcing $z_k < \tau$.
Overall, the defense significantly reduces the kinetic energy of the virtual masses system, and the attack causes $\dampingGain(\tilde{z}_k)$ to progressively approach $1$, representing the limit above which the system becomes \emph{strictly passive}. With the introduction of MR (B4), a larger anomaly at the joint level is required to produce the same desired hand motion; this anticipates the activation of the stealthiness constraint and further decreases the impact of the attack.
Notably, compared to B3 all attack impact metrics are reduced, e.g.,
$\operatorname{RMS}_{k=0}^{\AttackDuration-1}\bigl(\TargetPosition_k,\bm{p}_k\bigr)$
by
$\percentage{\BthreeHandRmsError}{\BfourHandRmsError}\% $,
$\operatorname{RMS}_{k=0}^{\AttackDuration-1}\bigl(\TargetPositionAttack_k,\bm{p}_k\bigr)$
by
$\percentage{\BthreeAttRmsError}{\BfourAttRmsError}\%$,
and the hand's path length by 
$\percentage{\BthreeHandPathLength}{\BfourHandPathLength}\%$.
However, the mean and max of $| \powerTotal |$ increase by 
$
  \percentage{\BthreePowMean}{\BfourPowMean}\%
$
and
$
  \percentage{\BthreePowMax}{\BfourPowMax}\%
$, respectively.
As the attack progresses and $\tilde{z}_k$ increases, VD (B3, B4) drives the actuators for joints $q_2, q_3, q_4$ toward their saturation limits, since the induced virtual friction progressively fills the actuator headroom reserved for defense in \eqref{eq:headroomMinus} and \eqref{eq:headroomPlus}.

\begin{figure}
\centering
%\tikzsetnextfilename{timeSignals} % disabled for ArXiv
% THIS FILE WAS GENERATED. Any manual changes will vanish. Communicate any changes to Gabriele.
\newcommand{\ADStauvalue}{71.5735} 
\newcommand{\ProjectedZPinned}{29.1412} 

\begin{tikzpicture}
	\begin{groupplot}[% 
		group style={
			group size=1 by 5,% One column, two rows
			vertical sep=1.2cm, % Vertical separation between plots
			horizontal sep=1cm % Horizontal separation between plots
		},
		title style={yshift=-1.3ex}, % move title down
		width=0.90\columnwidth, % Set plot width
		scale only axis, % Prevent scaling of labels
		xlabel near ticks, % Place xlabel near ticks
		ylabel near ticks, % Place ylabel near ticks
		grid=both, % Show grid
		legend style={ % Configure legend
			at={(axis description cs:0.025,0.5)}, % x=0.025 (slightly right), y=0.5 (middle)
			anchor=west,                      % attach legend's left side there
			font=\small,% or \footnotesize, \scriptsize, ...
			legend cell align=left, % Align legend cells
			align=left, % Align legend text
		},
		xmin=-1350,
		xmax=5050,
		xtick = {0, 1000, 2000, 3000, 4000, 5000},
		xticklabels = {0, 1000, 2000, 3000, 4000, $T$}
	]
	
		%%%%%%%%%%%%%%%%% 1 target power dissipation
		\nextgroupplot[
		title={$g(\tilde{z}_k)$ (target power dissipation)},
		height=2cm,
		legend entries={{A2},{A3},{B3},{B4}},
		xtick = {0, 1000, 2000, 3000, 4000, 5000},
		xticklabels = {0, 1000, 2000, 3000, 4000, $T$},
		% extra x ticks={654},
		% extra x tick labels={654},
		extra y ticks={1},                  % Position of the extra tick
		extra y tick labels={1},          % Custom label for that tick
		extra y tick style={
			grid=major,                      					 	 % Treat it like a major grid line
			major grid style={black, dotted, line width=1pt}  	   	 % Style the line
		},		
		]
		\addplot[c2plot, line width=1pt, mark=o, mark repeat=127] 
		table [x=samples, y=gainActiveDissipation_q, col sep=comma] {imgs/A2_damping_circleDecim.csv};
		\addplot[c3plot, line width=1pt, mark=triangle, mark repeat=101] 
		table [x=samples, y=gainActiveDissipation_q, col sep=comma] {imgs/A3_dampingAndManipulab_circleDecim.csv};
		\addplot[c6plot, line width=1pt, mark=x, mark repeat=41] 
		table [x=samples, y=gainActiveDissipation_q, col sep=comma] {imgs/B3_damping_stillDecim.csv};
		\addplot[c7plot, line width=1pt, mark=+, mark repeat=19] 
		table [x=samples, y=gainActiveDissipation_q, col sep=comma] {imgs/B4_dampingAndManipulab_stillDecim.csv};
		% vertical dotted line for departure B2-B3
		% \addplot[dotted, line width=0.8pt, color=red!70!black] coordinates {(654,0) (654,0.12339)};
		% \addplot[only marks, mark=o, mark options={fill=white}] coordinates {(654,0.12339)};
		
		%%%%%%%%%%%%% 2 kinNrgTot
		\nextgroupplot[
		title={$\sum_{j=1}^{7} \frac{1}{2} \, \dot{q}_{k,j}^2$ (virtual kinetic energy)},
		height=2.4cm, 
		legend entries={{A1},{A3},{B1},{B3},{B4}},
		xtick = {0, 1000, 2000, 3000, 4000, 5000},
		xticklabels = {0, 1000, 2000, 3000, 4000, $T$},
		]
		\addplot[c1plot, line width=1pt, mark=diamond, mark repeat=139] 
		table [x=samples, y=kinNrgTot, col sep=comma] {imgs/A1_NoDef_circleDecim.csv};		
		\addplot[c3plot, line width=1pt, mark=triangle, mark repeat=101] 
		table [x=samples, y=kinNrgTot, col sep=comma] {imgs/A3_dampingAndManipulab_circleDecim.csv};		
		\addplot[c4plot, line width=1pt, mark=asterisk, mark repeat=79] 
		table [x=samples, y=kinNrgTot, col sep=comma] {imgs/B1_noDef_stillDecim.csv};				
		%\addplot[c5plot, line width=1pt, mark=square, mark repeat=61] 
		%table [x=samples, y=kinNrgTot, col sep=comma] {imgs/B2_chisqr_stillDecim.csv};		
		\addplot[c6plot, line width=1pt, mark=x, mark repeat=41] 
		table [x=samples, y=kinNrgTot, col sep=comma] {imgs/B3_damping_stillDecim.csv};
		\addplot[c7plot, line width=1pt, mark=+, mark repeat=19] 
		table [x=samples, y=kinNrgTot, col sep=comma] {imgs/B4_dampingAndManipulab_stillDecim.csv};		

		%%%%%%%%%%%% 3 costVal
		\nextgroupplot[
		title={$\CostFunction_k$ (manipulability cost)},
		height=0.8cm, 
		ymin=0.05,
		ymax=0.16,
		ytick={0.1, 0.15},
		yticklabels={0.1, 0.15},
		legend entries={{B4}},
		xtick = {0, 1000, 2000, 3000, 4000, 5000},
		xticklabels = {0, 1000, 2000, 3000, 4000, $T$},
		]
		%\addplot[c3plot, line width=1pt, mark=triangle, mark repeat=101] 
		%table [x=samples, y=costVal, col sep=comma] {imgs/A3_dampingAndManipulab_circleDecim.csv};		
		\addplot[c7plot, line width=1pt, mark=+, mark repeat=19] 
		table [x=samples, y=costVal, col sep=comma] {imgs/B4_dampingAndManipulab_stillDecim.csv};		

		%%%%%%%%%%%%%%% 4 ADS Z 
		\nextgroupplot[
		title={$z_k$ ($\chi^2$ anomaly)},
		height=1.6cm, 
		legend entries={{B2},{B3},{B4}},
		%extra x ticks={654},
		%extra x tick labels={654},
		extra y ticks={\ADStauvalue},         % Position of the extra tick
		extra y tick labels={$\tau$},          % Custom label for that tick
		extra y tick style={
			grid=major,                      % Treat it like a major grid line
			major grid style={black, dotted, line width=1pt}  	   	 % Style the line
		},
		ytick = {0, 30},
		]
		\addplot[c5plot, line width=1pt, mark=square, mark repeat=61] 
		table [x=samples, y=ADS_z, col sep=comma] {imgs/B2_chisqr_stillDecim.csv};			
		\addplot[c6plot, line width=1pt, mark=x, mark repeat=41] 
		table [x=samples, y=ADS_z, col sep=comma] {imgs/B3_damping_stillDecim.csv};
		\addplot[c7plot, line width=1pt, mark=+, mark repeat=19] 
		table [x=samples, y=ADS_z, col sep=comma] {imgs/B4_dampingAndManipulab_stillDecim.csv};			

		% vertical dotted line for departure B2-B3
		%\addplot[dotted, line width=0.8pt, color=red!70!black] coordinates {(654,0) (654,0.40774)};
		%\addplot[only marks, mark=o, mark options={fill=white}] coordinates {(654,0.40774)};
		
		%%%%%%%%%%%%%% 5 handVelocity_l2
		\nextgroupplot[
		title={$\bigl\lVert \dot{\bm{p}}_k \bigr\rVert_2$ (hand velocity norm)},
		height=2.5cm, 
		legend entries={{A1},{B1},{B2},{B3},{B4}},
		]
		\addplot[c1plot, line width=1pt, mark=diamond, mark repeat=139] 
		table [x=samples, y=handVelocity_l2, col sep=comma] {imgs/A1_NoDef_circleDecim.csv};		
		\addplot[c4plot, line width=1pt, mark=asterisk, mark repeat=79] 
		table [x=samples, y=handVelocity_l2, col sep=comma] {imgs/B1_noDef_stillDecim.csv};		
		\addplot[c5plot, line width=1pt, mark=square, mark repeat=61] 
		table [x=samples, y=handVelocity_l2, col sep=comma] {imgs/B2_chisqr_stillDecim.csv};		
		\addplot[c6plot, line width=1pt, mark=x, mark repeat=41] 
		table [x=samples, y=handVelocity_l2, col sep=comma] {imgs/B3_damping_stillDecim.csv};
		\addplot[c7plot, line width=1pt, mark=+, mark repeat=19] 
		table [x=samples, y=handVelocity_l2, col sep=comma] {imgs/B4_dampingAndManipulab_stillDecim.csv};

\end{groupplot}
\end{tikzpicture}
\caption{Signals in the Scenarios as the sampling time $k$ varies in $[0,\AttackDuration-1]$.}
\label{fig:signals}
\end{figure}

Across all experiments, the state deviation $\left\lVert \tilde{\bm{x}}_k - {\bm{x}_k} \right\rVert_2$ has an average of
$0.11$ and a maximum of $0.27$.
It is worth discussing the impact of this deviation under $\mathcal{H}_0$ (A1, A2, A3), i.e., when $\hat{\bm{x}}_k \approx {\bm{x}}_k $.
For the MR defense, such deviation does not interfere with the primary task, because in \eqref{eq:null_actuation} we project onto the null-space of $\hat{\GeometricJacobian}_\TimeStep \approx \GeometricJacobian_\TimeStep$.
Regarding VD, an excessive deviation could invalidate Assumption \ref{ass:fidelity} and break the stability established in Theorem \ref{th:main_revised}, i.e., if {${\mathcal{E}_{\text{inj}} > \mathcal{E}_{\text{diss}}}$}.
In our experiments, {${\mathcal{E}_{\text{inj}} \ll \mathcal{E}_{\text{diss}}}$}; hence, stability is far from being compromised.
We acknowledge that this phenomenon establishes an upper bound for the re-synchronization duration interval in \eqref{eq:resync}.

 \section{Conclusion}\label{sec:conclusion}
This paper addressed the resilience of robotic manipulators against finite-horizon False Data Injection Attacks (FDIAs) targeting sensors.
Feedback linearization induces an \emph{integrator vulnerability} that allows sensor corruption to remain undetected by $\chi^2$ anomaly detectors while driving the end-effector off-task. 
To counter this threat, we introduced \emph{anomaly-aware virtual damping} and \emph{manipulability reduction} in the attack direction, that disincentivise anomaly injection as a function of an anomaly score derived from a measurement-free, actuation-projected predictor.
We established two key guarantees: (i) probabilistic bounds on power loss in nominal operation, enabling minimally invasive deployment, and (ii) preservation of closed-loop stability under bounded attenuation.  
On the adversary side, we derived a convex QCQP formulation of the one-step optimal stealthy attack, providing a principled benchmark against which to evaluate defenses.  
Simulation results on a $7$-DOF manipulator confirmed that the proposed defenses substantially improve the resilience of robotic arms against FDIAs compared to using $\chi^2$ detection alone.

\bibliography{references}
% \addcontentsline{toc}{section}{References}

\end{document}